\documentclass[conference,compsoc]{IEEEtran}

\usepackage{amsmath,amssymb,amsfonts,amsthm}
\usepackage{graphicx}
\usepackage[caption=false,font=footnotesize]{subfig}
\usepackage{multirow}
\usepackage{threeparttable}
\usepackage{cite}
\usepackage{xurl}
\usepackage{hyperref}
\usepackage{xcolor}
\usepackage{booktabs}
\usepackage{caption}
\usepackage{authblk}
\usepackage[many]{tcolorbox}

\newtheorem{assumption}{Assumption} 
\newtheorem{theorem}{Theorem}

\ifCLASSINFOpdf
\else
\fi

\begin{document}
\title{Have You Ever Seen Them? Entity-level Membership Inference through Interrogating Large Language Models}

\author[]{Yiran Zhu}
\author[]{Ziqi Yang}
\affil[]{Zhejiang University}

\maketitle

\begin{abstract}
Large Language Models (LLMs) raise growing concerns about privacy leakage and copyright compliance. Membership inference is a key tool for assessing such risks, but existing studies mainly focus on whether specific samples or sample-based data units are used for training. We argue that LLMs exhibit a human-memory-like behavior: an LLM may not memorize a specific sample verbatim, yet it can accumulate and reveal knowledge about a real-world entity from scattered mentions. This analogy motivates us to examine whether an LLM can be interrogated like a human interviewee to reveal its exposure to entity-related information. Motivated by this question, we propose entity-level membership inference, which determines whether information related to a target entity is used in LLM training. We study this task in the practical label-only black-box setting, where only generated texts are observable. We formalize the task under clue, input, and model constraints, establish the necessary and sufficient conditions for its feasibility, and instantiate five interrogation strategies based on this formalization. The strategies use limited entity clues to construct prompts, elicit entity-related responses, and infer membership from semantic features among the generated texts. We construct entity-level datasets and adapt state-of-the-art sample-level label-only methods to the entity-level setting as baselines. Experiments on person entities show that our methods achieve AUC up to 0.97 and bring gains of 6.0\%--17.5\% in Balanced Accuracy over the best adapted baseline. 
\end{abstract}

\IEEEpeerreviewmaketitle

\section{Introduction}

The capabilities of Large Language Models (LLMs) \cite{raiaan2024review} are typically built on pretraining over large-scale corpora \cite{touvron2023llama}, which raises concerns about privacy leakage \cite{radford2019language,carlini2021extracting} and copyright compliance \cite{ButterickSaveri2022,AuthorsGuild2023,TheVerge2023,NYT2023OpenAI}. This issue is especially challenging because many state-of-the-art LLMs are closed-source or only partially disclosed \cite{achiam2023gpt,touvron2023llama2}. 
Therefore, in the absence of training-set visibility, determining whether privacy-sensitive or copyrighted data is used to train the model has become one of the central challenges in auditing privacy leakage and copyright risks in LLMs. Membership inference provides a representative technical framework for studying this problem, aiming to infer whether the given data belongs to the training dataset of a target model \cite{shokri2017membership}.

Existing membership inference studies are largely \textit{sample-level}: they typically assume that the adversary has access to one or more samples. 
Sample-level membership inference can be broadly classified into the following categories.
Early studies mainly consider individual samples, aiming to determine whether a given sample is used to train the target model \cite{shokri2017membership, carlini2022membership, yeom2018privacy, choquette2021label, mattern2023membership}. Subsequent work extends this setting to user inference, which determines whether a collection of samples associated with a particular user is included in the model’s training data \cite{song2019auditing, kandpal2024user}. Other studies investigate dataset inference, where the goal is to infer whether a specific dataset is incorporated into the training process \cite{dziedzic2022dataset, maini2024llm}. Related work also studies whether the training data of generative models contain samples with a given property \cite{wang2024property}. Although these settings differ in their inference targets, they generally use samples as the carriers of information.

While sample-level membership inference has proven effective in auditing sample data exposure, LLMs exhibiting human-like capabilities drive us to rethink how privacy and copyright audits should be conducted under this new paradigm. Humans rarely acquire knowledge through the verbatim memorization of raw materials. Instead, we excel at abstracting rules and retaining high-level concepts. This raises a pivotal question: do LLMs, like humans, consolidate pre-training data into underlying concepts rather than merely retaining isolated training samples? Since ``concepts'' themselves lack tractable mathematical boundaries, we draw on the classical theory of concepts\footnote{``The classical theory holds that concepts are essentially definitions, each characterized by fixed rules determining to which entities the concept applies.'' See: \url{https://en.wikipedia.org/wiki/Concept}.} and make an initial attempt to use real-world entities (such as people, organizations, or events) as a concrete representation of concepts. Consequently, we pose a research question: \textit{can we interrogate an LLM like a human interviewee to reveal its exposure to information related to a specific entity?} To address this question, we propose \textbf{entity-level membership inference} under the most challenging label-only black-box setting, which aims to determine whether information related to a target entity has been used in LLM training.

This human-like memory paradigm (i.e., adopting entities rather than sample-level data units as inference targets) offers several advantages.
First, entities better align with human semantic cognition and legal compliance assessment. Privacy and copyright concerns inherently center on entities (e.g., individuals, organizations/institutions, creative works) rather than isolated texts.
Second, entity mentions are often short and can appear in multiple samples.
Entity-level inference can thus help mitigate the distribution-shift problem in sample-level inference, such as temporal shifts in textual distributions, as well as the near-member problem arising from paraphrased or semantically similar texts \cite{duan2024membership}. 
Finally, the entity captures actual knowledge-memorization behavior. For example, a private individual may be mentioned across multiple sources, each revealing only partial information such as affiliation, address, or family relations. 
Even if the LLM cannot recall a specific sample, it may learn and reproduce facts, relations, and descriptive patterns associated with the individual.

We present a formalization of this task and propose corresponding inference methods under clue, input, and model constraints. These constraints structurally mirror real-world interviewing scenarios where the investigator holds only incomplete, fragmented evidence and operates under limited responses. Grounded in the key intuition that the LLM exposed to an entity during training may reveal additional related information when prompted with specific information about that entity (\textit{related information leakage}), we theoretically establish the necessary and sufficient conditions for the feasibility and theoretical boundaries of this task. To robustly approximate the boundary, we eschew dynamic dialogue loops due to their lack of reproducibility, feature intractability, and compounding error risks. Instead, we distill six classic paradigms from investigative interviewing, instantiating an overarching prompt template and five independent interrogation strategies. We construct strategy-specific prompts using the target entity and its limited related clues to query the target LLM. To align with the specific elicitation logic of different strategies, the responses are paired via two pairing schemes to compute semantic similarity vectors, which are subsequently processed by a binary classifier to yield the final membership decision.

We construct dedicated entity-level membership inference datasets to evaluate our framework. Experiments show that our entity-level methods substantially outperform state-of-the-art sample-level label-only baselines (evaluated on samples constructed to encapsulate the target entities and their related clues). On person category entities, our five strategies surpass the strongest baseline by 6.0\%--17.5\% in Balanced Accuracy and 0.072--0.17 in AUC. On organization category entities, our strategies consistently maintain performance superiority, although all methods undergo performance degradation. Our framework has applicability across target LLMs of varying scales and architectures. Balanced Accuracy and AUC differences are kept within 5.0\% and 0.036, respectively, although TPR@1\%FPR remains model-dependent. Furthermore, we conduct an in-depth analysis of parameters that may affect inference performance. Finally, a realistic case study on the colossal LLM validates our real-world utility.

We summarize our main contributions as follows:

\begin{itemize}

\item{To the best of our knowledge, we are the first to propose and formalize entity-level membership inference. For this task, we introduce an interrogation-based framework that conceptualizes LLMs as human interviewees.
}

\item{We theoretically establish the necessary and sufficient conditions for the feasibility of entity-level inference and instantiate five interrogation strategies to construct strategy-specific prompts.
}

\item{We construct entity-level datasets to evaluate our methods. Experiments demonstrate the effectiveness of our methods.
Furthermore, our empirical analysis reveals that localized biographical memories are easier to audit than diffuse organizational knowledge.
}

\end{itemize}

\section{Background}

\subsection{Large Language Models}

LLMs acquire strong language modeling capabilities and substantial world knowledge through self-supervised pretraining on large-scale text corpora.
Mainstream LLMs (such as the GPT family~\cite{achiam2023gpt}, Gemini~\cite{team2023gemini}, LLaMA~\cite{touvron2023llama}, and Pythia~\cite{biderman2023pythia}) typically adopt a decoder-only Transformer architecture~\cite{wolf2020transformers}.
At inference time, an LLM generates text by decoding from the conditional distribution $p_{\theta}(t_i \mid t_{<i})$.
Decoding can be deterministic (e.g., greedy decoding or beam search) or can incorporate additional heuristics such as contrastive decoding~\cite{su2022contrastive}.
Alternatively, it can be stochastic, e.g., nucleus sampling, which samples from the smallest set of tokens whose cumulative probability mass exceeds a threshold~\cite{holtzmancurious}.
When sampling is used, the temperature parameter controls the sharpness of the distribution: lower temperatures yield more peaked distributions and thus more deterministic outputs, whereas higher temperatures flatten the distribution and increase randomness and diversity.

\subsection{Label-only Membership Inference}
\label{background_mia}

Since this paper studies membership inference under the most restrictive label-only black-box setting (where only the generated texts are observable), we focus on introducing existing label-only membership inference methods.

Early label-only membership inference studies show that access to logits or probabilities is not strictly necessary for inferring membership \cite{choquette2021label}. 
The key intuition is that models tend to behave more consistently on training samples than on non-training samples, even when only discrete output labels are available. This work establishes the feasibility of membership inference under restricted black-box access, but it focuses on traditional classification models.
Tang et al. \cite{tang2023assessing} study label-only membership inference on fine-tuning datasets of personalized language models. 
In contrast, pretrained LLMs are trained on massive and diverse corpora and often exhibit stronger generalization ability. As a result, perturbations at the token level may be too coarse to capture the tiny differences between member and non-member samples.

He et al. \cite{he2025towards} propose PETAL, which exploits the correlation between per-token semantic similarity and token probability to perform label-only membership inference on pretrained LLMs. PETAL first learns a linear mapping from semantic similarity to token probability using a surrogate model. It then queries the target model to obtain token-level semantic similarities for a target sample. Using the learned mapping, PETAL approximates the probabilities assigned by the target model to each token and computes an alternative perplexity score. Membership is then inferred by thresholding this score.

Lu et al. \cite{lu2026context} propose In-Context Probing (ICP) for membership inference on fine-tuned LLMs and provide a label-only variant. The main method uses probability-based black-box access. It constructs probing contexts and measures changes in the target sample’s log-likelihood. Its label-only variant replaces log-likelihood with semantic similarity, following the intuition of He et al. \cite{he2025towards}. It is experimentally compared against the PETAL method. ICP includes two probing strategies: reference-data-based (ICP-MIA-Ref) and self-perturbation probing (ICP-MIA-SP). The former retrieves semantically similar input-output pairs from public reference data, while the latter constructs probing contexts through generative or masking-based perturbations.

Overall, existing label-only membership inference methods improve practical applicability but remain sample-level. In contrast, we study entity-level label-only membership inference, which better aligns with real-world privacy and copyright concerns.

\subsection{Modern Public Interrogation Techniques}

In modern criminal justice and intelligence practice, the traditional confession-centered ``interrogation'' model is being replaced in many settings by ``investigative interviewing''. This newer approach prioritizes information gathering and the reliability of evidence.

The PEACE model \cite{shepherd2021investigative,bull2014investigative} is one of the best-known procedural frameworks. It establishes predictable communicative boundaries, encouraging a free narrative followed by targeted follow-up query verification.
The Scharff technique \cite{granhag2016scharff} is a cooperative intelligence-gathering approach that guides narratives by leaving deliberate information ``gaps'' and implying extensive prior knowledge to elicit detailed confirmations.
CI/ECI \cite{memon2010cognitive} are mainly used with witnesses and victims. Grounded in memory-retrieval and cue-dependency theories, this framework optimizes external retrieval cues and reporting contexts to maximize recall volume and detail.
SUE \cite{hartwig2014strategic} is a tactical framework that focuses on evidence disclosure.  It manipulates the timing and ordering of disclosing held external evidence to evaluate the consistency of the subject's statement.
For children and other vulnerable groups, the NICHD protocol \cite{lamb2007structured} provides a structured approach to forensic interviewing. It maximizes open-ended narrative and minimizes suggestive biases by strictly controlling question types.
Compared with these information-focused frameworks, the Reid technique \cite{inbau2013criminal} reflects a more confession-oriented tradition. It employs behavioral analysis, leading questions, and escalating pressure to test compliance and elicit target admissions.
The method is controversial. Critics argue that it can increase compliance and increase the risk of false confessions.

To infer whether a target LLM has seen the specified entity object, we borrow from well‑established human interrogation techniques used in real-world practice and adapt them to query the LLM.

\section{Problem Statement}

\subsection{Problem Definition}

Let $\mathcal{M}$ denote a target LLM trained on an unknown corpus $\mathcal{D}_{\text{train}}$, \(\mathcal{I}(e)\) denote the information related to a target entity \(e\), which may include textual mentions, factual descriptions, relations to other entities, or contextual descriptions associated with \(e\).
Given \(e\), our goal is to infer whether \(\mathcal{I}(e)\) is contained in $\mathcal{D}_{\text{train}}$. 
We define the entity-level membership label as:
\begin{equation}
y_e =
\begin{cases}
1, & \text{if } \mathcal{I}(e) \in \mathcal{D}_{\text{train}}, \\
0, & \text{otherwise}.
\end{cases}
\end{equation}

We consider the label-only black-box setting. 
Thus, the adversary can only interact with $\mathcal{M}$ through prompts and observe generated texts.
For each \(e\), the adversary is given a clue set:
$\mathcal{C}_e = \{c_1, c_2, \ldots, c_m\}$,
where each \(c_i\) is a piece of information associated with \(e\), such as another related entity, a contextual hint, or a partial description. 
The adversary then conducts a sequence of prompt-based interactions with the LLM and uses the generated responses to infer \(y_e\).

Inspired by real-world investigative interviewing, we identify three types of constraints that should be considered when designing entity-level membership inference methods: the clue, input, and model constraints.

\textbf{The Clue Constraint.}
For the target entity \(e\), the adversary can access only a limited number of clues related to \(e\). 
Given the clue set \(\mathcal{C}_e\), we assume that its size is bounded by a clue budget:
$|\mathcal{C}_e| \leq B_c$.
This constraint reflects the realistic situation where the adversary has only incomplete prior information about \(e\).

\textbf{The Input Constraint.}
The adversary's interaction with $\mathcal{M}$ is restricted to the prompt $p \in \mathcal{P}_{\mathrm{HI}}$, where \(\mathcal{P}_{\mathrm{HI}}\) denotes the space of human-interpretable textual prompts.
Since LLMs are trained primarily on human-generated content (such as natural language and code), human-interpretable prompts may more effectively elicit entity-related semantic associations. In addition, this constraint keeps the query process consistent with the principles of human interviews.

\textbf{Model Constraints.}
The adversary's operational space is limited by the access interface and usage conditions of $\mathcal{M}$. 
In the label-only setting, the adversary can only observe generated text responses $r = \mathcal{M}(p)$. 
The interaction is also constrained by the model's supported capabilities (e.g., context length, format adherence) and practical interaction costs (e.g., budget, latency, rate limits).

We denote the admissible set of interrogation strategies under these constraints as
$\Pi_{\mathrm{adm}}$.
Any feasible interrogation strategy must satisfy
$\pi \in \Pi_{\mathrm{adm}}$.

\subsection{Motivation}

The key intuition behind our framework is that an LLM may reveal entity-related information beyond what is explicitly provided in the prompt when it has been exposed to such information during training. 
This is analogous to human investigative interviewing, where carefully designed questions may lead an interviewee to reveal additional relevant information that reflects their prior knowledge.

We refer to this intuition as related information leakage: under suitable prompt-based interactions, the model may generate additional information associated with \(e\) (e.g., related entities, factual descriptions, relations, or contextual details), even when such information is not directly provided in the prompt. We use $\Delta(e, \mathcal{C}_e)$ to denote the entity-related information elicited from the model, which is observable in the model's generated text. 
These observable traces can later be used to design statistics or decision rules (e.g., output stability under perturbation, response consistency, behavioral differences relative to the public reference model) for entity-level membership inference under label-only access.

\section{Method}

\subsection{Overview}

\begin{figure}[!t]
\centering
\includegraphics[width=1.0\columnwidth]{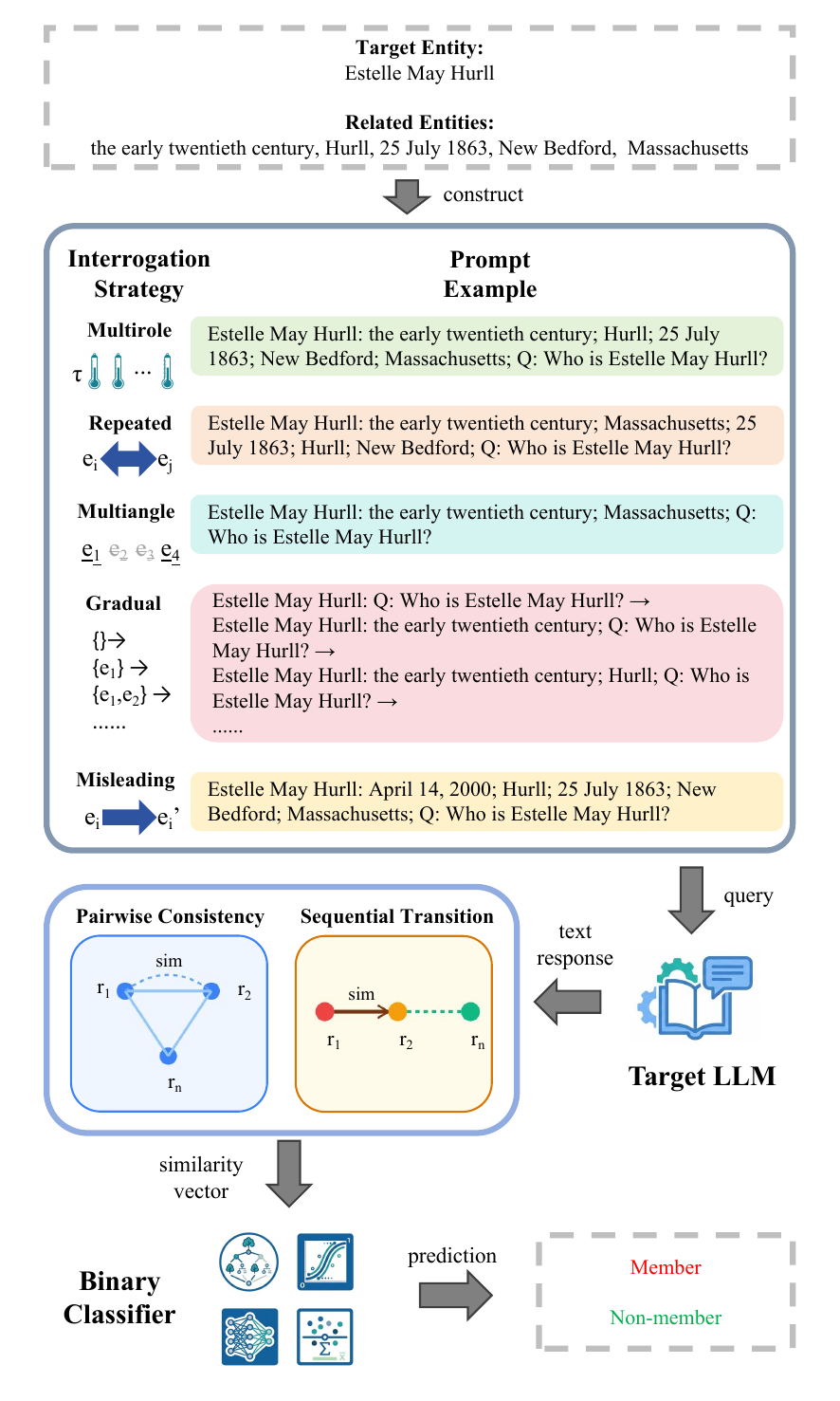}
\caption{Overview of entity-level membership inference.}
\label{overview}
\end{figure}

The overview of our entity-level membership inference process is shown in Fig. \ref{overview}.
Guided by the theoretical boundary and classic paradigms from investigative interviewing, we construct strategy-specific prompts using a target entity and its limited related clues to query the target LLM. 
Based on the intuition of related information leakage, we map elicited response sequences into semantic similarity vectors via two pairing schemes. These vectors are subsequently processed by the binary classifier to yield the final membership decision.

\subsection{Theoretical Analysis}

Before designing the entity-level membership inference method, we first theoretically establish the necessary and sufficient conditions for the feasibility of this task. 

\begin{assumption}[Hermetic Parameterization and Factual Non-Hallucination]
\label{ass:hermetic_non_hallucination}
We assume that the target model $\mathcal{M}$, parameterized by $\theta$, operates under a strictly closed knowledge boundary. It does not utilize external retrieval mechanisms, tool/API executions, test-time memory updates, or any other form of knowledge injection, meaning that under a fixed architecture, training objective, and optimization process, any factual knowledge encoded in $\theta$ derives solely from the training corpus $\mathcal{D}_{\text{train}}$. We further assume the principle of factual non-hallucination for out-of-distribution entities. For any target entity $e$, if $I(e) \cap \mathcal{D}_{\text{train}} = \emptyset$, the probability of $\mathcal{M}$ generating facts regarding $e$ that are not explicitly provided in the query prompt is zero, thereby preventing the model from consistently synthesizing accurate factual configurations of unseen entities by pure chance.
\end{assumption}
Under Assumption \ref{ass:hermetic_non_hallucination}, we present Theorem \ref{thm:necessary} as follows:

\begin{theorem}[Necessary Condition]
\label{thm:necessary}
If $\mathcal{I}(e) \notin \mathcal{D}_{\text{train}}$ (i.e., the information related to the target entity $e$ never appeared in the LLM's training data), then there exists no interrogation strategy $\pi$ satisfying $\pi \in \Pi_{\mathrm{adm}}$ that can induce the model to output information related to $e$. Formally:
\begin{equation}
\forall \pi\; (\pi \in \Pi_{\mathrm{adm}}) \; \Longrightarrow \; \Delta(e, \mathcal{C}_e) \not\in \mathcal{M}(\pi(e, \mathcal{C}_e)) 
\end{equation}
\end{theorem}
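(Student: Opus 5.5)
The plan is to argue by contradiction, with Assumption \ref{ass:hermetic_non_hallucination} carrying essentially all of the logical weight. Fix an entity $e$ with $\mathcal{I}(e)\notin\mathcal{D}_{\text{train}}$, and read this as $\mathcal{I}(e)\cap\mathcal{D}_{\text{train}}=\emptyset$, which is the form the assumption uses. Suppose some admissible strategy $\pi\in\Pi_{\mathrm{adm}}$ produces a prompt $p=\pi(e,\mathcal{C}_e)\in\mathcal{P}_{\mathrm{HI}}$ whose response $r=\mathcal{M}(p)$ contains $\Delta(e,\mathcal{C}_e)$. By the definition in the Motivation subsection, $\Delta(e,\mathcal{C}_e)$ is entity-related information elicited from the model that is \emph{not} explicitly supplied in the prompt. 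So the response contains at least one fact $f\in\mathcal{I}(e)$ with $f$ not appearing in $p$.

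The key steps, in order, are these. First, I will use the model constraint: in the label-only setting the only channel from $\mathcal{M}$ to the adversary is $r=\mathcal{M}(p)$. Second, I will use the hermetic half of the assumption. With no retrieval, tools, test-time memory, or other injection, $r$ is a (possibly stochastic) function of $(\theta,p)$ alone, and any factual content of $\theta$ derives from $\mathcal{D}_{\text{train}}$. Hence $f$ has only two possible sources: it was copied from $p$, or it was encoded in $\theta$ and therefore came from $\mathcal{D}_{\text{train}}$. Third, I will rule out the first source using the fact that $\Delta$ excludes prompt-provided facts. I will rule out the second source because $\mathcal{I}(e)\cap\mathcal{D}_{\text{train}}=\emptyset$. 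Fourth, I will invoke the non-hallucination half of the assumption to close the remaining gap, namely that $f$ could be generated by chance rather than from a source. The assumption says the probability of emitting such an $f$ is zero. This gives $\Pr[\Delta(e,\mathcal{C}_e)\in\mathcal{M}(p)]=0$ for every admissible $p$.

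Since $\pi$ was arbitrary, quantifying over $\Pi_{\mathrm{adm}}$ yields the displayed implication. I will also remark that neither the clue budget $B_c$ nor the restriction to $\mathcal{P}_{\mathrm{HI}}$ is actually needed for this direction. The conclusion holds for any prompt, so these constraints only shrink the set being quantified over.

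The main obstacle is definitional rather than technical. The statement "$\Delta\notin\mathcal{M}(\cdot)$" must be made precise as a probability-zero (almost-sure) statement over the decoding randomness. The separation between "facts provided in the prompt" and "new facts" must also be made crisp. The delicate case is a clue $c_i$ that logically entails other facts about $e$. I would handle it first by defining $\Delta$ to exclude anything derivable from $\mathcal{C}_e$ together with general, non-entity-specific knowledge in $\theta$. Such derivations then count as "provided in the prompt," and the non-hallucination assumption covers everything else.
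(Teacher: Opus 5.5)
Your proposal is correct and follows essentially the same route as the paper's proof. Both are contradiction arguments in which any entity-specific content of $\mathcal{M}(\pi(e,\mathcal{C}_e))$ not supplied by the prompt must trace back to $\mathcal{D}_{\text{train}}$, contradicting $\mathcal{I}(e)\notin\mathcal{D}_{\text{train}}$. Your version is more careful than the paper's, which appeals informally to ``the fundamental nature of generative models'' instead of citing Assumption~\ref{ass:hermetic_non_hallucination}, and which ignores chance generation and the almost-sure reading. Note, however, that the non-hallucination clause as stated already covers every fact about $e$ not explicitly in the prompt, including ones derivable from the clues. So neither your source decomposition nor your proposed redefinition of $\Delta(e,\mathcal{C}_e)$ is actually needed.
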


The proof of Theorem \ref{thm:necessary} is in Appendix \ref{proof}.
We observe that Theorem 1 is necessary but not sufficient: even if $\mathcal{I}(e) \in \mathcal{D}_{\text{train}}$, the adversary still needs to know an interrogation strategy $\pi$ from $e$ to $p_t$ that can reliably extract $\mathcal{I}(e)$. A sufficient condition (ideal scenario) is the adversary’s ability to arbitrarily control the prompt space, i.e., to construct a perfect prompt for any internal knowledge.

\begin{theorem}[Sufficient Condition]
\label{thm:sufficient}
Assume there exists a ``perfect prompt mapping function'' $\mathcal{F}$ satisfying:
\begin{enumerate}
\item For any internal entity-related information $\mathcal{I}(e) \in \mathcal{D}_{\text{train}}$, $p^* = \mathcal{F}(e, \mathcal{C}_e)$ satisfies $p^* \in \mathcal{P}_{\mathrm{HI}}$;
\item $\mathcal{M}(p^*) = \Delta(e, \mathcal{C}_e) = \mathcal{I}(e)$, i.e., upon receiving $p$, $\mathcal{M}$ outputs information $\Delta(e, \mathcal{C}_e)$ exactly matching $\mathcal{I}(e)$. 
\end{enumerate}
Then for any entity-related information $\mathcal{I}(e)$ (as long as $\mathcal{I}(e) \in \mathcal{D}_{\text{train}}$), there exists an interrogation strategy $\pi$ such that $\Delta(e, \mathcal{C}_e) \in \mathcal{M}(\pi(e, \mathcal{C}_e))$.
\end{theorem}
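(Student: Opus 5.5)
The argument is a direct construction: we exhibit the required strategy $\pi$ as the composition of the perfect prompt mapping $\mathcal{F}$ with a single query to $\mathcal{M}$. We then verify membership in $\Pi_{\mathrm{adm}}$ and the output condition. Fix an arbitrary entity $e$ with clue set $\mathcal{C}_e$ such that $\mathcal{I}(e) \in \mathcal{D}_{\text{train}}$. Define $\pi(e, \mathcal{C}_e) := \mathcal{F}(e, \mathcal{C}_e) = p^*$, so that $\pi$ consists of one prompt followed by observing the generated text.

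The first step is to show $\pi \in \Pi_{\mathrm{adm}}$ by checking the three constraints separately. \emph{Clue constraint:} $\pi$ uses only $e$ and the given $\mathcal{C}_e$, whose size is at most $B_c$ by hypothesis, and it introduces no further clues. \emph{Input constraint:} condition 1 of the theorem gives $p^* \in \mathcal{P}_{\mathrm{HI}}$ directly. \emph{Model constraints:} $\pi$ issues a single query and reads only the textual response $r = \mathcal{M}(p^*)$, which is exactly label-only access. The remaining capability and cost limits (context length, budget) are met by a single admissible prompt. We will state this implicitly as part of what ``perfect'' means for $\mathcal{F}$, since $\mathcal{M}(p^*)$ is assumed to be well defined.

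The second step is the output condition. By condition 2, $\mathcal{M}(\pi(e,\mathcal{C}_e)) = \mathcal{M}(p^*) = \Delta(e,\mathcal{C}_e)$, and trivially $\Delta(e,\mathcal{C}_e) \in \mathcal{M}(\pi(e,\mathcal{C}_e))$, reading ``$\in$'' as ``is contained in the generated output''. Because $e$ was arbitrary among entities with $\mathcal{I}(e) \in \mathcal{D}_{\text{train}}$, the universal claim follows. Together with Theorem~\ref{thm:necessary}, this can be noted as completing the characterization: elicitation is possible if and only if the entity was seen, under the idealized $\mathcal{F}$.

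The main obstacle is not mathematical depth but definitional looseness, namely the overloaded use of $\in$ for both the training-set and output-containment relations. The place where care is needed is the model-constraint part of admissibility, since the statement does not explicitly guarantee that $p^*$ fits the model's context length or cost budget. My first attempt would be to absorb this into the hypothesis on $\mathcal{F}$, interpreting $\mathcal{M}(p^*)$ being defined as meaning $p^*$ is an accepted input. If that seems too quick, I would add an explicit remark that $\mathcal{F}$ ranges over prompts within the interface limits.
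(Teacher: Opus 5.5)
Your proposal is correct and follows the paper's own proof: both define $\pi(e,\mathcal{C}_e) := \mathcal{F}(e,\mathcal{C}_e)$, obtain $\Delta(e,\mathcal{C}_e) \in \mathcal{M}(\pi(e,\mathcal{C}_e))$ directly from condition 2, and use the arbitrariness of $e$ (equivalently of $\mathcal{I}(e)$) for the universal claim. Your extra check that $\pi \in \Pi_{\mathrm{adm}}$, including the context-length concern, is not in the paper and is not needed, because the stated conclusion only asserts that some interrogation strategy $\pi$ exists; it is harmless.
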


The proof of Theorem \ref{thm:sufficient} is in Appendix \ref{proof}.
This theorem describes the \textbf{theoretical upper bound} on what can be achieved: if there existed a way to losslessly map any internal knowledge to a safe prompt, then any information could be extracted. In practice, such a perfect mapping $\mathcal{F}$ is extremely difficult (or impossible) to construct, because there is a non-linear, non-invertible gap between the model's internal representations and human-interpretable texts.
Consequently, the central goal of practical interrogation is to \textbf{approximate} this ideal limit. Through multiple and indirect questioning, piece together and infer the target information from the model's outputs.

\subsection{Interrogation-specific Prompt Construction}
\label{prompt_construction}

To approximate the theoretical upper bound of interrogation, one might intuitively propose a dynamic conversational feedback loop similar to human interviews. However, open-ended conversational loops introduce severe methodological challenges. (1) Lack of reproducibility, as dynamic intermediate responses generate massive non-deterministic branching across models. (2) Feature intractability, as dynamic conversations yield variable-length, highly unstructured dialogue trajectories that are difficult to align. (3) Semantic drift and compounding errors, where a single early incorrect generation or hallucination permanently derails the subsequent questioning path. Consequently, to robustly approximate the theoretical elicitation threshold, we eschew active dialogue and instead propose a standardized, predefined static prompting scheme.

To systematically construct this interrogation framework, we refer to the rich literature of investigative interviewing and distill six classic paradigms: the PEACE model \cite{shepherd2021investigative}, the Cognitive Interview and Enhanced Cognitive Interview (CI/ECI) \cite{memon2010cognitive}, the Strategic Use of Evidence (SUE) technique \cite{hartwig2014strategic}, the NICHD Protocol \cite{lamb2007structured}, the Scharff technique \cite{granhag2016scharff}, and the Reid technique \cite{inbau2013criminal}. We leverage the Scharff technique as our overarching prompt template construction guideline, while instantiating the remaining five specialized techniques into our five distinct interrogation strategies (i.e., \textit{multirole}, \textit{repeated}, \textit{multiangle}, \textit{gradual}, and \textit{misleading} interrogation).

All five strategies follow the same clue and input constraints: given a target entity, the adversary uses only a limited set of related entities as clues and formulates human-interpretable prompts. They also share the same label-only model constraint, where the adversary observes only generated texts. The only optional capability we consider is whether the adversary is allowed to adjust the generation temperature, a commonly exposed hyperparameter that controls output randomness and creativity in many commercial LLM APIs \cite{hu2025membership}. In contrast, the default text generation mode is greedy decoding, which selects only the tokens with the highest probability.
In addition, inspired by the questioning principle in the Scharff technique, where blanks are deliberately left for the informant to fill in \cite{granhag2016scharff}, we construct prompts using only the given information (i.e., the target entity $e$ and its limited related entities $\mathcal{C}^e_e = \{c^e_1, c^e_2, \ldots, c^e_m\}$) rather than introducing additional external knowledge. Therefore, the initial prompt $p_0$ is constructed as: ``$e$: $c^e_1$; $c^e_2$; $\ldots$; $c^e_m$; Q: Who/What is $e$?''. To establish a stable benchmark for subsequent comparisons, we define the global baseline response $r_{\text{base}}$ generated under the default greedy decoding setting ($\tau = 0$) as:
\begin{equation}
    r_{\text{base}} = \mathcal{M}(p_0, \tau = 0)
\end{equation}

\textbf{Multirole interrogation} queries the target LLM as if it were multiple witnesses with different response styles. The investigative interviewing guidelines based on the PEACE model suggest that, when multiple witnesses are present, they should be separated immediately and interviewed individually \cite{newzealandpolice, brainscape}. Inspired by this principle, we simulate different ``witnesses'' by adjusting the temperature $\tau$ of the target LLM, which induces different generations for the same prompt. Formally, the adversary constructs a set of $n$ prompt–temperature pairs $\{ (p_0, \tau_i) \}_{i=1}^{n}$ with distinct temperatures ($\tau_i \neq \tau_j$ for $i \neq j$).
This corresponds to the least restrictive variant of our label-only setting: the adversary still observes only the generated texts, but is additionally allowed to control the temperature during generation.

\textbf{Repeated interrogation} repeatedly questions the target LLM with the same set of clues while varying their order. In CI/ECI, techniques such as change order and reverse order encourage interviewees to recall events \cite{memon2010cognitive}. Similarly, the SUE framework suggests that the timing and order of evidence disclosure can influence an interviewee's statement \cite{hartwig2014strategic}. Motivated by these observations, we construct 
a set of $n$ prompts $\{p_i\}_{i=1}^{n}$ using distinct permutations of the related entity sequence (denoted as $s_i$ where $s_i \neq s_j$ for $i \neq j$), thereby obtaining different generated texts.

\textbf{Multiangle interrogation} aims to elicit responses from the target LLM under different contextual angles. CI/ECI emphasizes changing perspectives \cite{memon2010cognitive}, while SUE encourages examining the consistency of statements across different evidence-based perspectives \cite{hartwig2014strategic}. Following these principles, we construct a set of $n$ prompts $\{p_i\}_{i=1}^{n}$ using randomly sampled subsets of the related entity set $\mathcal{C}^e_e$ (denoted as $\mathcal{C}^e_i \subseteq \mathcal{C}^e_e$ where $\mathcal{C}^e_i \neq \mathcal{C}^e_j$ for $i \neq j$).
Each subset provides a distinct contextual angle for interrogating the target LLM. 

\textbf{Gradual interrogation} progressively reveals clues to guide the target LLM. This design is inspired by SUE's strategic or late disclosure of evidence \cite{hartwig2014strategic} and the PEACE model's practice of eliciting an initial free narrative before using follow-up questions for detail elicitation and consistency checking \cite{shepherd2021investigative}. We initially construct a prompt using only the target entity, and then gradually append related entities to the prompt. This produces a sequence of prompts with increasing amounts of information.
Formally, the adversary constructs a sequence of prompts $\{p_i\}_{i=1}^{|\mathcal{C}^e_e|+1}$ by incrementally appending related entities to the target entity $e$ (denoted as $\mathcal{C}^e_i \subset \mathcal{C}^e_j$ for $1 \le i < j \le |\mathcal{C}^e_e|+1$, where $\mathcal{C}^e_i$ represents the subset of appended related entities in $p_i$, and $\mathcal{C}^e_1 = \emptyset$).

\textbf{Misleading interrogation} examines whether fabricated or perturbed clues can steer the target LLM toward different generations. Traditional confession-oriented methods, such as the Reid technique, often involve false-evidence tactics \cite{inbau2013criminal}, which have been highly controversial in human interrogation. We explore whether a similar idea has an effect when interrogating LLMs. To reduce the risk of inducing hallucinations \cite{huang2025survey}, we introduce only minimal perturbations when constructing the prompts, rather than fabricating entirely new contexts.
Formally, the adversary constructs a set of prompts $\{p_i\}_{i=1}^{|\mathcal{C}^e_e|}$.
The adversary generates a corresponding set of new entities (unrelated to $e$) $\widetilde{\mathcal{C}}^e_e$, and constructs each prompt $p_i$ by replacing $e_i$ with $\widetilde{e}_i \in \widetilde{\mathcal{C}}^e_e$ (where $\widetilde{e}_i \neq \widetilde{e}_j$ for $i \neq j$).

In practice, the theoretical prompt space size $N_\pi$ varies significantly across different interrogation strategies, which determines our query count $n_\pi$ for each strategy.
For the first three strategies, exhaustively traversing their theoretical search spaces is computationally infeasible due to continuous domains or combinatorial explosion:
\begin{itemize}
    \item \textbf{Multirole:} since generation temperatures $\tau$ are drawn from a continuous range, the theoretical prompt space is infinite.
    \item \textbf{Repeated:} the number of unique permutations of related entities scales factorial as $|\mathcal{C}^e_e|!$.
    \item \textbf{Multiangle:} the power set of related entities leads to $2^{|\mathcal{C}^e_e|}$ possible subsets.
\end{itemize}
When $|\mathcal{C}^e_e|$ is large, traversing these spaces is prohibitive. Thus, for the three strategies, we restrict the practical query count to a fixed number $n_\pi = n$ prompts (where $n$ is a hyperparameter).
Conversely, the remaining two strategies can be exhaustively traversed within linear bounds, yielding exactly $|\mathcal{C}^e_e| + 1$ prompts for incremental appending ($\pi=gradual$) and $|\mathcal{C}^e_e|$ prompts for entity perturbation ($\pi=misleading$).

\subsection{Membership Inference based on Related Information Leakage}

Based on the intuition of related-information leakage, we perform membership inference using the sequence of $n_\pi$ textual responses obtained by querying the target LLM $\mathcal{M}$ with $n_\pi$ constructed prompts.
A naive idea would be to directly quantify the amount of entity-related information elicited from $\mathcal{M}$. However, defining and compiling a complete, closed-form set of ground-truth reference texts for any given target entity is inherently impractical. To overcome this hurdle, we instead pair the generated responses and calculate their semantic similarities, which serves as a tractable, indirect proxy.
Notably, while the multirole interrogation strategy yields $n$ texts across $n$ temperatures, the greedy-decoded response $r_{\text{base}}$ (representing an additional witness's response generated under the same prompt) is also integrated into the pairing process. 
Thus, we define the text sequence to be paired $\mathcal{R^\pi}$ as $[r^\pi_1, \dots, r^\pi_{n_\pi}]$ where $\pi \neq multirole$ and $[r_{\text{base}}, r^\pi_1, \dots, r^\pi_{n_\pi}]$ where $\pi = multirole$.

To align with the specific elicitation logic of different strategies, we introduce two distinct text pairing schemes (i.e., Pairwise Consistency and Sequential Transition) to calculate the semantic similarity of the texts and construct the similarity vector $\mathbf{v}_\pi$ for $e$. 
\textbf{Pairwise Consistency.} Inspired by previous membership inference \cite{hu2025membership}, we evaluate the overall semantic consistency across all responses for four strategies except gradual interrogation, yielding a vector of dimension $\frac{|\mathcal{R^\pi}|(|\mathcal{R^\pi}| - 1)}{2}$. This scheme operates on the premise that if $e$ is a member, $\mathcal{M}$'s responses across different prompts will consistently gravitate toward the same underlying factual core, yielding high global similarity. For non-members, the lack of a shared knowledge base will instead result in low pairwise similarity.
\textbf{Sequential Transition.} For gradual interrogation, we track the step-by-step marginal effect of gradually increasing information. Hence, we only compare each response $r^\pi_i$ with its immediate predecessor $r^\pi_{i-1}$, yielding a vector of dimension $|\mathcal{R^\pi}| - 1 = |\mathcal{C}^e_e|$. This sequential pairing allows us to capture how the similarity of the model's responses shifts as clue density increases. 
Thus, we define $\mathbf{v}_\pi$ as:
\begin{equation}
\mathbf{v}_\pi = 
\begin{cases}
\left[ \text{sim}(r^\pi_i, r^\pi_j) \right]_{1 \le i < j \le |\mathcal{R^\pi}|}, & \text{for } \pi \neq gradual \\
\left[ \text{sim}(r^\pi_{i-1}, r^\pi_i) \right]_{i=2}^{|\mathcal{R^\pi}|}, & \text{for } \pi = gradual
\end{cases}
\end{equation}
where $\text{sim}(\cdot, \cdot)$ is a semantic similarity function.

Finally, the similarity vector $\mathbf{v}_\pi$ is used as input to a binary classifier $\mathcal{BC}$, which returns a prediction for the membership status of the target entity $e$. 
Specifically, we consider three widely-adopted classification models for $\mathcal{BC}$ representing distinct machine learning paradigms, namely Random Forest (RF), Logistic Regression (LR), and Multi-Layer Perceptron (MLP), along with a simple thresholding classifier based on the Mean Value (MV) of similarity scores.
For each strategy, we train $\mathcal{BC}$ on a training set consisting of known member and non-member entities, and subsequently evaluate its binary classification performance on a disjoint set of unseen testing entities.

\section{Evaluation}

In this section, we first outline our experimental setup and evaluate the effectiveness of our entity-level membership inference. We then assess its applicability across target LLMs of varying scales and architectures, followed by a parameter analysis. Finally, we present a case study to demonstrate its practical utility.

\subsection{Experimental Setup}

The experiments are conducted on a desktop computer with Ubuntu 22.04LTS operating system, 3.0 GHz Intel® Xeon® Gold 6248R CPU, 251GB memory, and two NVIDIA RTX A6000 GPUs.

\subsubsection{Datasets}

To evaluate the effectiveness of our proposed methods in inferring entity-level membership, we apply the Named Entity Recognition (NER) technique \cite{lample2016neural, li2020survey} to identify entities from texts and target two representative categories of entities: \emph{PERSON} and \emph{ORG}. The selection of these categories is rooted in contemporary privacy and copyright concerns. Specifically, the exposure of \emph{PERSON} related information may not only compromise individual privacy (such as PII leakage), but also infringe upon intellectual property (such as proprietary roles within copyrighted textual works). Similarly, the exposure of \emph{ORG} related information may jeopardize corporate confidentiality (e.g., sensitive user data) and violate institutional copyrighted assets. For each entity type, we create a balanced dataset containing a fixed number of member entities alongside an equal number of constructed non-member entities.

\textbf{Member Entity Recognition.}
Identifying member entities is relatively straightforward, as many LLMs are pretrained on widely used corpora. Inspired by document inference \cite{meeus2024did}, we focus on a representative benchmark, 
RedPajama-Data-1T \cite{weber2024redpajama}. It comprises seven distinct data sources, such as Wikipedia \cite{Wikipedia}, C4 \cite{raffel2020exploring}, GitHub \cite{GitHub}, Books (Project Gutenberg \cite{ProjectGutenberg} and Books3 \cite{reisner2023these, knibbs2023battle}). RedPajama-Data-1T is released as an open-source dataset, with the sole exception of the Books3 subset, which has been removed from public distribution due to copyright infringement complaints. 

Specifically, we first use the $en\_core\_web\_trf$ model of the spaCy library \cite{spaCy} (which is a widely used NLP tool \cite{prasad2023diving, shoaib2025principled, yan2025skillpov, yan2025no}) to extract entities from Wikipedia in RedPajama-Data-1T (a total of 6,630,651 Wikipedia pages in English). For each page, if the first extracted entity $e_m$ within the body text belongs to \emph{PERSON} or \emph{ORG} and shares a lexical overlap with the page's title (to accommodate partial name matches in either the title or the text), we consider the page to be primarily centered around $e_m$. Thereby, we designate $e_m$ as a candidate member entity and all other entities co-occurring within the same body text as candidate related entities (serving as contextual cues).
For each entity category, we randomly sample 1,000 entities to constitute the set of member entities within our dataset. To guarantee data quality, each candidate target entity is subjected to a secondary verification process via Gemini 3 Flash \cite{team2023gemini}. Any entity deemed mismatched with its assigned category (e.g., an entity under the \emph{PERSON} category classified as non-\emph{PERSON} by Gemini 3 Flash) is filtered out. Furthermore, we select the first 20 unique entities that appear chronologically in the text as the related entity set $\mathcal{C}^e_{e_m}$ for \emph{PERSON} and the first 25 for \emph{ORG} (see Section \ref{num_related_entities} for details on the selection of the number of $\mathcal{C}^e_{e_m}$).

\textbf{Non-member Entity Construction.}
Constructing non-member entities is more challenging. To minimize distribution shift and near-member effects, we construct non-member target entities from member entities, ensuring that the resulting entities remain close to the members while being unlikely to appear in the training data. 

Specifically, to construct a candidate non-member target entity $e_{nm}$, we randomly sample two pages and extract a single target-category entity from each, denoted as $e^1_{m}$ and $e^2_{m}$. We partition both entities at their respective midpoints based on space-delimited word counts, utilizing floor division ($\lfloor L/2 \rfloor$, where $L$ represents the word length) to determine the boundary. We then concatenate the first half of $e^1_{m}$ and the second half of $e^2_{m}$ to yield the new combined entity $e_{nm}$.
Next, we collect entities from different Wikipedia pages as clues $\mathcal{C}^e_{e_{nm}}$ to $e_{nm}$. Pages are randomly selected one at a time, with only a single entity being extracted from each page, until the total counts of 20 for \emph{PERSON} and 25 for \emph{ORG} are reached.

Consistent with the processing of member entities, we employ Gemini 3 Flash to perform a secondary verification on each $e_{nm}$. Additionally, to mitigate potential false positives, we search through the publicly available sub-datasets within RedPajama-Data-1T, filtering out any candidate cases where $e_{nm}$ and its associated clues $\mathcal{C}^e_{e_{nm}}$ co-occur in the same text. In practice, we observe that this filter is highly stringent: searching with merely $e_{nm}$ alongside five entities in $\mathcal{C}^e_{e_{nm}}$ already yields zero co-occurrences. We note one exception regarding the Books3 subset: due to its official withdrawal from public distribution, we are unable to perform this search-based filtering on its raw texts. Out of strict respect for copyright compliance and to uphold reproducible research standards, we refrain from utilizing unauthorized external mirrors. Nevertheless, given that zero co-occurrences are consistently achieved across the other massive subsets, we are confident that the probability of $e_{nm}$ and $\mathcal{C}^e_{e_{nm}}$ accidentally co-occurring in Books3 is statistically negligible and does not compromise our evaluation framework.

\textbf{Training and Evaluation Set Partitioning.}
For each entity category, we randomly partition the dataset of 1,000 members and 1,000 non-members $D^1_e$ into training and evaluation sets in a 4:1 ratio. The training set is used both to train the binary classifier $\mathcal{BC}$ and to perform hyperparameter tuning. Detailed hyperparameter configurations for $\mathcal{BC}$ are provided in Appendix \ref{appendix_classifiers}. Specifically, we perform a $K$-fold cross-validation ($K=5$) for each hyperparameter configuration, evaluating each candidate set by averaging its scores (we use AUC) in the $K$ validation folds. The configuration that yields the highest average score is selected as the optimal hyperparameter set. Using these optimal hyperparameters, we then retrain the final model on the entire training dataset. In contrast, the evaluation set is reserved strictly for a single, final evaluation to rigorously assess the model's generalization capability on entirely unseen data. 
Additionally, we randomly sample 10\% each of the member and non-member entities from the training set to form the dataset $D^2_e \subset D^1_e$ for our parameter analysis.

\subsubsection{Target Models}

For our evaluations, we use two distinct families of open-source LLMs pretrained only on RedPajama-Data-1T \cite{weber2024redpajama}: OpenLLaMA \cite{geng2023openllama} and the base model of RedPajama-INCITE \cite{weber2024redpajama}. OpenLLaMA is a fully open-source reproduction of Meta's LLaMA \cite{touvron2023llama}. It is publicly available in three parameter scales: 3B, 7B, and 13 B. The base model of RedPajama-INCITE adopts the Pythia model architecture \cite{biderman2023pythia}. It has two scaled variants consisting of 3B and 7B parameters.

\subsubsection{Other Parameter and Implementation Details}
The following are the four parameter details and two implementation details of our proposed methods.

\textbf{Parameter Details.}
(a) For the multirole interrogation strategy, we employ the designated temperature set \{0.01, 0.05, 0.1, 0.2, 0.3, 0.4, 0.5, 0.6, 0.7, 0.8, 0.9, 1.0, 1.2, 1.4, 1.6, 1.8\} proposed by Hu et al. \cite{hu2025membership}. In their work, the authors evaluate the outputs of Vision-Language Models on target samples across various temperatures.
(b) The number of related entities for \emph{PERSON} and \emph{ORG} is set to 20 and 25, respectively (see Section~\ref{num_related_entities} for details).
(c) The maximum token generation length for the target LLM is capped at 256 (see Section~\ref{max_new_tokens} for details).
(d) In Section~\ref{prompt_construction}, we analyze that the theoretical prompt spaces of the three interrogation strategies (i.e., multirole, repeated, and multiangle) are either continuous or subject to combinatorial explosion. In our experiments, we restrict the actual number of prompts to 14 (see Section~\ref{num_prompts} for details). 

\textbf{Implementation Details.}
(a) For the misleading strategy, we generate a new entity set $\widetilde{\mathcal{C}}^e_e$ for each target entity $e$. For a member entity $e_m$, we randomly select a page text (excluding its source text $t_m$) and extract a random entity $e'$ from it in each iteration. Under the condition that $e'$ does not appear in $t_m$, we append it to $\widetilde{\mathcal{C}}^e_{e_m}$. We enforce the constraint $|\widetilde{\mathcal{C}}^e_{e_m}|=|\mathcal{C}^e_{e_m}|$, thereby ensuring that the replacement entities utilized to construct misleading prompts are unique. Similarly, for a non-member entity $e_{nm}$, we randomly select a page text, extract a random entity, and add it to $\widetilde{\mathcal{C}}^e_{e_{nm}}$ provided that it is absent from both the original clue set $\mathcal{C}^e_{e_{nm}}$ and the growing set $\widetilde{\mathcal{C}}^e_{e_{nm}}$ itself.
(b) We generate embeddings for paired texts and compute their semantic similarity using the sentence-transformers/all-MiniLM-L6-v2 model \cite{reimers2019sentence}, which is widely adopted for measuring text semantic similarity~\cite{wen2024membership,he2025towards,lu2026context}.

\subsubsection{Baseline Methods}

To the best of our knowledge, we are the first to propose entity-level membership inference, leaving no direct baselines for comparison. To establish benchmarks, we leverage state-of-the-art label-only sample-level membership inference methods (i.e., PETAL \cite{he2025towards} and the label-only variant of ICP-MIA \cite{lu2026context}, see Section \ref{background_mia} for details). To accommodate their sample-level input requirement, we convert each target entity and its related entities into a 256-word text sample.

Specifically, for member entities, we retrieve the Wikipedia source text housing the target and its related entities, retaining only the first 256 words. Since the target is the first identified entity and its related entities are captured chronologically thereafter, this 256-word window is guaranteed to encompass the entire entity set. For non-member entities, inspired by ICP-MIA \cite{lu2026context}, we employ Qwen-2.5-72B-Instruct to generate synthetic text containing the target and its related entities (prompts are detailed in Appendix~\ref{appendix_prompt}). We set \textit{max\_new\_tokens} to 512 and subsequently truncate the generated output to the first 256 words to maintain length consistency.
Notably, because member samples originate from genuine human-written Wikipedia entries while non-member samples are synthesized by Qwen, the sample-level methods might exploit subtle stylistic discrepancies as a classification shortcut, thereby granting the baselines an additional inference advantage. See Appendix \ref{appendix_baselines} for detailed configuration information of baselines.

\subsubsection{Metrics}

We adopt three common metrics to evaluate MIAs.
\textbf{Balanced Accuracy} measures the attack’s performance on a balanced set of members and non-members \cite{shokri2017membership, choquette2021label}. It is defined as the average of the true positive rate (TPR) and true negative rate (TNR).
The area under the receiver operating characteristic curve (\textbf{AUC}) is a threshold-independent metric widely used for membership inference \cite{meeus2024did, he2025towards, hu2025membership} that summarizes the attack’s ability to distinguish members from non-members across all decision thresholds.
Following prior work \cite{carlini2022membership}, we also report the TPR at a low false positive rate (\textbf{TPR@low FPR}). It characterizes the attack success under a stringent false-alarm constraint and thus reflects high-confidence privacy leakage of the LLM's training data.

\subsection{Main Results}

\subsubsection{Comparison to Baselines}

\begin{table*}[!t]
\renewcommand{\arraystretch}{1.3}
\caption{Results of various inference targets and label-only inference methods on entity category labels \emph{PERSON} and \emph{ORG}. The target LLM is OpenLLaMA-7B. (MV: Mean Value, RF: Random Forest, LR: Logistic Regression)}
\label{table_comparison}
\centering
\setlength{\tabcolsep}{5pt}  
\begin{tabular}{ccccccccccc}
\hline
\multirow{3}{*}{Inference Target} & \multicolumn{2}{c}{\multirow{3}{*}{Inference Method}} & \multicolumn{4}{c}{\emph{PERSON}} & \multicolumn{4}{c}{\emph{ORG}}\\
\cline{4-11}
 & & & Balanced & \multirow{2}{*}{AUC} & \multicolumn{2}{c}{TPR@FPR} & Balanced & \multirow{2}{*}{AUC} & \multicolumn{2}{c}{TPR@FPR}\\ 
\cline{6-7} \cline{10-11}
 & & & Acc & & FPR=1\% & FPR=10\% & Acc & & FPR=1\%& FPR=10\%\\
\hline
\multirow{3}{*}{Sample-level} & \multicolumn{2}{c}{PETAL\cite{he2025towards}} & 0.755 & 0.800 & 0.380 & 0.585 & 0.673 & 0.685 & 0.190 & 0.390\\
\cline{2-11}
 & \multicolumn{2}{c}{ICP-MIA-SP (Masking-based) \cite{lu2026context}} & 0.603 & 0.598 & 0.020 & 0.190 & 0.558 & 0.548 & 0.005 & 0.050\\
 & \multicolumn{2}{c}{ICP-MIA-SP (Generation-based) \cite{lu2026context}} & 0.675 & 0.715 & 0.235 & 0.395 & 0.650 & 0.708 & 0.085 &  0.370\\
\hline
\multirow{20}{*}{Entity-level} & \multirow{4}{*}{multirole interrogation} & MV & 0.913 & 0.964 & 0.695 & 0.910 & 0.835 & 0.886 & 0.160 & 0.720\\
 &  & RF & 0.915 & 0.962 & 0.590 & 0.925 & 0.823 & 0.885 & 0.260 & 0.730\\
 &  & LR & 0.915 & 0.965 & 0.685 & 0.915 & \textbf{0.838} & 0.889 & 0.200 & \textbf{0.770}\\
 &  & MLP & 0.915 & 0.964 & 0.675 & 0.915 & 0.825 & 0.889 & \textbf{0.265} & 0.740\\
\cline{2-11}
 & \multirow{4}{*}{repeated interrogation} & MV & 0.920 & 0.967 & 0.305 & 0.920 & 0.805 & 0.861 & 0.025 & 0.610\\
 &  & RF & 0.928 & \textbf{0.970} & 0.265 & 0.925 & 0.828 & 0.877 & 0.050 & 0.560\\
 &  & LR & 0.923 & 0.967 & 0.290 & 0.915 & 0.808 & 0.861 & 0.025 & 0.605\\
 &  & MLP & \textbf{0.930} & 0.965 & 0.250 & \textbf{0.940} & 0.810 & 0.859 & 0.015 & 0.565\\
\cline{2-11}
 & \multirow{4}{*}{multiangle interrogation} & MV & 0.905 & 0.968 & 0.695 & 0.885 & 0.825 & 0.883 & 0.060 & 0.585\\
 &  & RF & 0.913 & 0.968 & \textbf{0.725} & 0.910 & 0.830 & \textbf{0.892} & 0.100 & 0.635\\
 &  & LR & 0.910 & 0.969 & 0.705 & 0.905 & 0.828 & 0.886 & 0.085 & 0.615\\
 &  & MLP & 0.913 & 0.967 & 0.700 & 0.920 & 0.820 & 0.886 & 0.105 & 0.625\\
\cline{2-11}
 & \multirow{4}{*}{gradual interrogation} & MV & 0.830 & 0.872 & 0.215 & 0.535 & 0.750 & 0.807 & 0.065 & 0.490\\
 &  & RF & 0.823 & 0.885 & 0.095 & 0.660 & 0.775 & 0.826 & 0.035 & 0.455\\
 &  & LR & 0.815 & 0.879 & 0.170 & 0.560 & 0.740 & 0.807 & 0.060 & 0.460\\
 &  & MLP & 0.823 & 0.881 & 0.240 & 0.540 & 0.750 & 0.814 & 0.035 & 0.465\\
\cline{2-11}
 & \multirow{4}{*}{misleading interrogation} & MV & 0.847 & 0.910 & 0.050 & 0.725 & 0.723 & 0.760 & 0.030 & 0.275\\
 &  & RF & 0.885 & 0.943 & 0.105 & 0.855 & 0.733 & 0.801 & 0.160 & 0.425\\
 &  & LR & 0.878 & 0.934 & 0.150 & 0.810 & 0.713 & 0.766 & 0.050 & 0.320\\
 &  & MLP & 0.878 & 0.931 & 0.060 & 0.830 & 0.725 & 0.755 & 0.000 & 0.235\\
\cline{2-11}
\hline
\end{tabular}
\end{table*}

This experiment details the performance of our proposed entity-level membership inference on OpenLLaMA-7B across two representative entity categories (\emph{PERSON} and \emph{ORG}), comparing it with three state-of-the-art label-only sample-level membership inference methods: PETAL \cite{he2025towards}, Masking-based ICP-MIA-SP \cite{lu2026context}, and Generation-based ICP-MIA-SP \cite{lu2026context}. To facilitate evaluation, these sample-level baselines are performed on text samples constructed to encapsulate the target entities and their related clues. As shown in Table \ref{table_comparison}, our entity-level inference significantly outperforms the label-only sample-level baselines.

On the \emph{PERSON} category, our framework utilizing all five strategies outperforms the strongest baseline (PETAL) by 6.0\%--17.5\% in Balanced Accuracy and by 0.072--0.17 in AUC. Specifically, the repeated strategy with MLP achieves the highest Balanced Accuracy of 93.0\%, while the same strategy with RF yields a peak AUC of 0.97. In TPR@1\%FPR, our multirole and multiangle strategies significantly surpass PETAL (which scores 38.0\%) by 21\%--34.5\%, whereas the remaining three strategies show lower utility. In TPR@10\%FPR, four strategies outperform PETAL by 14\%--35.5\%, while the gradual strategy performs comparably to the baseline. Among the baselines, PETAL is the strongest, 
followed by generation-based ICP-MIA-SP. The masking-based variant performs worst due to random masking's severe semantic destruction.

In addition, the performance of all inference methods is lower on \emph{ORG} than on \emph{PERSON}. We attribute this cross-category performance gap to the differing data profiles of these entities in LLMs' pre-training corpora. Biographical information surrounding specific individuals is typically highly localized, unique, and concentrated within cohesive narrative structures (e.g., biography books). In contrast, organizational data frequently appears in scattered, diverse, and generic contexts (e.g., financial reports and news aggregators). This dispersion leads to weaker factual association networks in the model, making memory retrieval more difficult. Despite this, our entity-level inference maintains its superiority on \emph{ORG}. All five proposed strategies consistently outperform baselines in both Balanced Accuracy and AUC. In TPR@1\%FPR, the multirole strategy excels over the strongest baseline across RF, LR, and MLP. In TPR@10\%FPR, four of our proposed strategies exceed the best baseline.

\subsubsection{Effectiveness Analysis of Different Interrogation Strategies}

\begin{figure*}[!t]
\centering
\subfloat[PCA]{\includegraphics[width=0.5\textwidth]{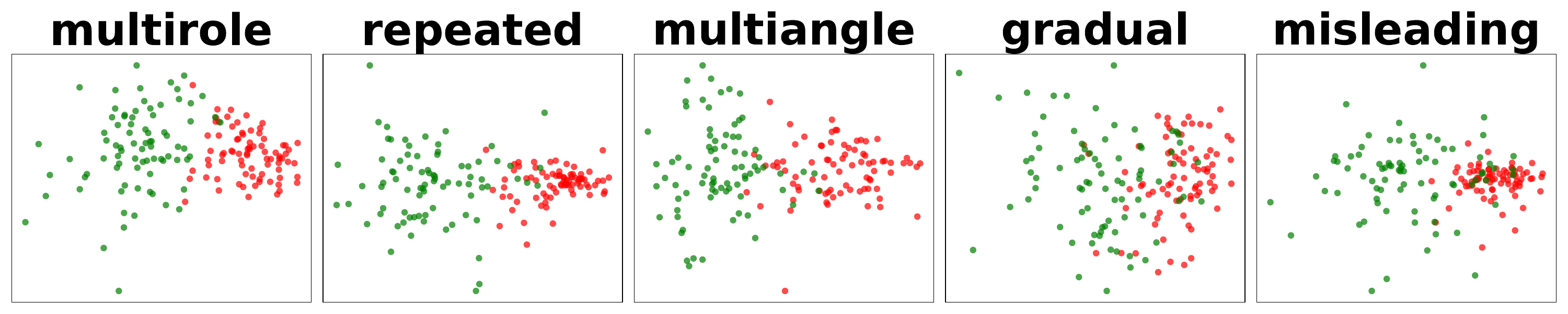}%
\label{fig_pca}}
\subfloat[t-SNE]{\includegraphics[width=0.5\textwidth]{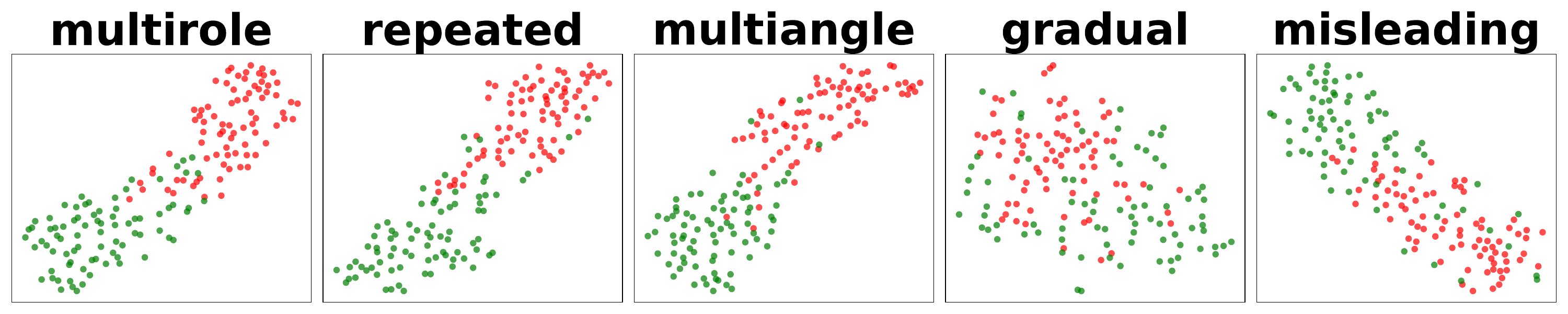}%
\label{fig_tsne}}
\caption{PCA and t-SNE visualizations of semantic similarity vectors for generated texts of member vs. non-member entities on \emph{PERSON} across five strategies. Red data points: members. Green data points: non-members.}
\label{fig_scatter}
\end{figure*}

We delve into the performance discrepancies among our five proposed interrogation strategies. As shown in Table \ref{table_comparison}, on the \emph{PERSON} category, multirole, repeated, and multiangle strategies exhibit outstanding and comparable performance. Specifically, the repeated strategy dominates in Balanced Accuracy, AUC, and TPR@10\%FPR. However, its TPR@1\%FPR is over 28.5\% lower than the other two, while the multiangle strategy achieves the peak performance of 72.5\%. We hypothesize that while the multiangle strategy alters the semantic substance of the queries, the repeated strategy merely permutes the order of identical lexical components. LLMs are sensitive to identical context permutations (known as permutation instability \cite{lu2022fantastically}), which occasionally trigger erratic attention shifts and produce highly divergent response outliers for a minor subset of member samples. These outliers heavily penalize TPR@1\%FPR but have a negligible impact on average metrics like AUC. The second tier is occupied by the misleading strategy, followed by the gradual strategy at the bottom. In contrast, on the \emph{ORG} category, the same top three strategies maintain their dominance, while the second and third tiers swap. This shift is rooted in the highly diffuse and weakly bounded nature of \emph{ORG} training data, making the model susceptible to the adversarial suggestibility of the misleading strategy.

To intuitively investigate these performance discrepancies, we project the semantic similarity vectors of $D^2_e$ for \emph{PERSON} using PCA \cite{mackiewicz1993principal} (Fig.~\ref{fig_pca}) and t-SNE \cite{van2008visualizing} (Fig.~\ref{fig_tsne}), where red and green points denote member and non-member entities, respectively.
Across both visualizations, members and non-members exhibit distinct regional clustering, particularly under the top-performing multirole, repeated, and multiangle strategies. While the remaining two strategies show weaker partitioning, the misleading strategy exhibits a visibly cleaner separation than the gradual strategy. This arises because the gradual strategy (utilizing Sequential Transition) captures the trajectory of shifting similarities rather than the absolute similarity levels evaluated under Pairwise Consistency. 
Overall, this geometric separation aligns with the performance ranking in Table~\ref{table_comparison}, confirming that our generated similarity vectors are highly discriminative, which simplifies the decision boundary.
Notably, non-member points display significantly higher spatial dispersion than member points (Fig.~\ref{fig_pca}). This occurs because querying a memorized entity consistently activates a cohesive internal memory, producing stable and coherent outputs that cluster tightly in the vector space. Conversely, for an unmemorized entity, the model lacks the corresponding internal knowledge representation and fails to produce consistent, entity-specific information, leading to scattered vector coordinates.

\subsection{Applicability to Different Target LLMs}

\begin{table*}[!t]
\renewcommand{\arraystretch}{1.3}
\caption{Results on Different Target LLMs. The entity category label is \emph{PERSON}.}
\label{table_applicability}
\centering
\setlength{\tabcolsep}{5pt}  
\begin{tabular}{cccccccccccccc}
\hline
Interrogation & Target & \multicolumn{4}{c}{Balanced Acc} & \multicolumn{4}{c}{AUC} & \multicolumn{4}{c}{TPR@1\%FPR}\\ 
\cline{3-14}
Strategy & LLM & MV & RF & LR & MLP & MV & RF & LR & MLP & MV & RF & LR & MLP\\
\hline
\multirow{5}{*}{multirole} & OpenLLaMA-3B & 0.888 & 0.885 & 0.885 & 0.885 & 0.955 & 0.956 & 0.958 & 0.954 & 0.630 & 0.670 & 0.665 & 0.590\\
 & OpenLLaMA-7B & 0.913 & 0.915 & 0.915 & 0.915 & 0.964 & 0.962 & 0.965 & 0.964 & 0.695 & 0.590 & 0.685 & 0.675\\
 & OpenLLaMA-13B & 0.895 & 0.900 & 0.898 & 0.898 & 0.957 & 0.956 & 0.960 & 0.961 & 0.535 & 0.635 & 0.630 & 0.650\\
 & RedPajama-INCITE-Base-3B-v1 & 0.913 & 0.913 & 0.920 & 0.905 & 0.964 & 0.959 & 0.963 & 0.955 & 0.460 & 0.475 & 0.470 & 0.485\\
 & RedPajama-INCITE-7B-Base & 0.903 & 0.905 & 0.903 & 0.893 & 0.963 & 0.963 & 0.961 & 0.949 & \textbf{0.730} & 0.565 & 0.640 & 0.560\\
\hline
\multirow{5}{*}{repeated} & OpenLLaMA-3B & 0.895 & 0.895 & 0.895 & 0.898 & 0.947 & 0.954 & 0.947 & 0.947 & 0.105 & 0.380 & 0.110 & 0.170\\
 & OpenLLaMA-7B & \textbf{0.920} & \textbf{0.928} & \textbf{0.923} & \textbf{0.930} & 0.967 & 0.970 & 0.967 & 0.965 & 0.305 & 0.265 & 0.290 & 0.250\\
 & OpenLLaMA-13B & 0.915 & 0.923 & 0.918 & 0.920 & \textbf{0.970} & \textbf{0.973} & \textbf{0.970} & \textbf{0.971} & 0.495 & 0.495 & 0.495 & 0.530\\
 & RedPajama-INCITE-Base-3B-v1 & 0.890 & 0.890 & 0.888 & 0.890 & 0.953 & 0.961 & 0.952 & 0.949 & 0.395 & 0.595 & 0.395 & 0.375\\
 & RedPajama-INCITE-7B-Base & 0.908 & 0.900 & 0.908 & 0.905 & 0.960 & 0.961 & 0.960 & 0.958 & 0.285 & 0.430 & 0.265 & 0.065\\
\hline
\multirow{5}{*}{multiangle} & OpenLLaMA-3B & 0.895 & 0.900 & 0.900 & 0.903 & 0.953 & 0.954 & 0.955 & 0.955 & 0.450 & 0.405 & 0.455 & 0.455\\
 & OpenLLaMA-7B & 0.905 & 0.913 & 0.910 & 0.913 & 0.968 & 0.968 & 0.969 & 0.967 & 0.695 & \textbf{0.725} & \textbf{0.705} & \textbf{0.700}\\
 & OpenLLaMA-13B & 0.898 & 0.903 & 0.905 & 0.893 & 0.968 & 0.970 & \textbf{0.970} & 0.966 & 0.720 & 0.700 & 0.690 & 0.585\\
 & RedPajama-INCITE-Base-3B-v1 & 0.895 & 0.890 & 0.895 & 0.888 & 0.950 & 0.947 & 0.952 & 0.949 & 0.450 & 0.405 & 0.460 & 0.470\\
 & RedPajama-INCITE-7B-Base & 0.913 & 0.913 & 0.915 & 0.918 & 0.968 & \textbf{0.973} & 0.969 & 0.968 & 0.575 & 0.695 & 0.585 & 0.655\\
\hline
\multirow{5}{*}{gradual} & OpenLLaMA-3B & 0.790 & 0.790 & 0.785 & 0.788 & 0.851 & 0.856 & 0.855 & 0.858 & 0.105 & 0.220 & 0.080 & 0.155\\
 & OpenLLaMA-7B & 0.830 & 0.823 & 0.815 & 0.823 & 0.872 & 0.885 & 0.879 & 0.881 & 0.215 & 0.095 & 0.170 & 0.240\\
 & OpenLLaMA-13B & 0.780 & 0.813 & 0.775 & 0.785 & 0.864 & 0.879 & 0.864 & 0.866 & 0.325 & 0.295 & 0.375 & 0.355\\
 & RedPajama-INCITE-Base-3B-v1 & 0.803 & 0.828 & 0.820 & 0.820 & 0.864 & 0.885 & 0.875 & 0.878 & 0.115 & 0.130 & 0.070 & 0.110\\
 & RedPajama-INCITE-7B-Base & 0.815 & 0.825 & 0.813 & 0.820 & 0.858 & 0.886 & 0.866 & 0.860 & 0.000 & 0.140 & 0.000 & 0.000\\
\hline
\multirow{5}{*}{misleading} & OpenLLaMA-3B & 0.800 & 0.845 & 0.845 & 0.830 & 0.876 & 0.916 & 0.906 & 0.895 & 0.085 & 0.285 & 0.060 & 0.120\\
 & OpenLLaMA-7B & 0.847 & 0.885 & 0.878 & 0.878 & 0.910 & 0.943 & 0.934 & 0.931 & 0.050 & 0.105 & 0.150 & 0.060\\
 & OpenLLaMA-13B & 0.823 & 0.855 & 0.853 & 0.845 & 0.885 & 0.929 & 0.910 & 0.897 & 0.235 & 0.225 & 0.195 & 0.145\\
 & RedPajama-INCITE-Base-3B-v1 & 0.833 & 0.878 & 0.863 & 0.860 & 0.888 & 0.926 & 0.914 & 0.908 & 0.130 & 0.265 & 0.195 & 0.180\\
 & RedPajama-INCITE-7B-Base & 0.855 & 0.880 & 0.873 & 0.885 & 0.922 & 0.939 & 0.933 & 0.933 & 0.225 & 0.415 & 0.290 & 0.340\\
\hline
\end{tabular}
\end{table*}

In this experiment, we compare target LLMs across varying parameter scales and neural architectures to assess the applicability of our proposed entity-level inference. The hyperparameters of our method are optimized on OpenLLaMA-7B and transferred to other target LLMs without additional tuning.

As shown in Table \ref{table_applicability}, within the same architecture, OpenLLaMA-7B delivers the best overall performance in most cases. A notable exception occurs under the repeated strategy, where OpenLLaMA-13B exceeds OpenLLaMA-7B in both AUC and TPR@1\%FPR. This exception arises because while hyperparameter transfer slightly penalizes performance, the superior learning and memorization capacity of the 13B model provides a countervailing gain. This capacity-driven benefit is further corroborated by the fact that OpenLLaMA-13B consistently outperforms OpenLLaMA-3B. Across different architectures, OpenLLaMA-7B and RedPajama-INCITE-7B-Base exhibit highly comparable and competitive performance.

Overall, our framework maintains stability in Balanced Accuracy and AUC when transferring across model scales and architectures, with maximum discrepancies of 5.0\% and 0.036, respectively. However, TPR@1\%FPR is more sensitive to these variations, exhibiting a maximum discrepancy of up to 32.0\%. This indicates that while overall discriminative power (e.g., AUC and Accuracy) generalizes robustly across diverse model scales and architectures, precise low-FPR decision boundaries would benefit from model-specific hyperparameter calibration.

\subsection{Parameter Analysis}

In this section, we investigate the impact of three key hyperparameters on entity-level inference performance: the number of related entities $|\mathcal{C}^e_e|$, the maximum generation token length \textit{max\_new\_tokens}, and the constructed prompt number $n_p$. We report the optimal $K$-fold cross-validation AUC scores for MV and RF classifiers.

\subsubsection{Number of Related Entities}
\label{num_related_entities}

\begin{figure*}[!t]
\centering
\subfloat[\emph{PERSON}: Mean Value]{\includegraphics[width=0.24\textwidth]{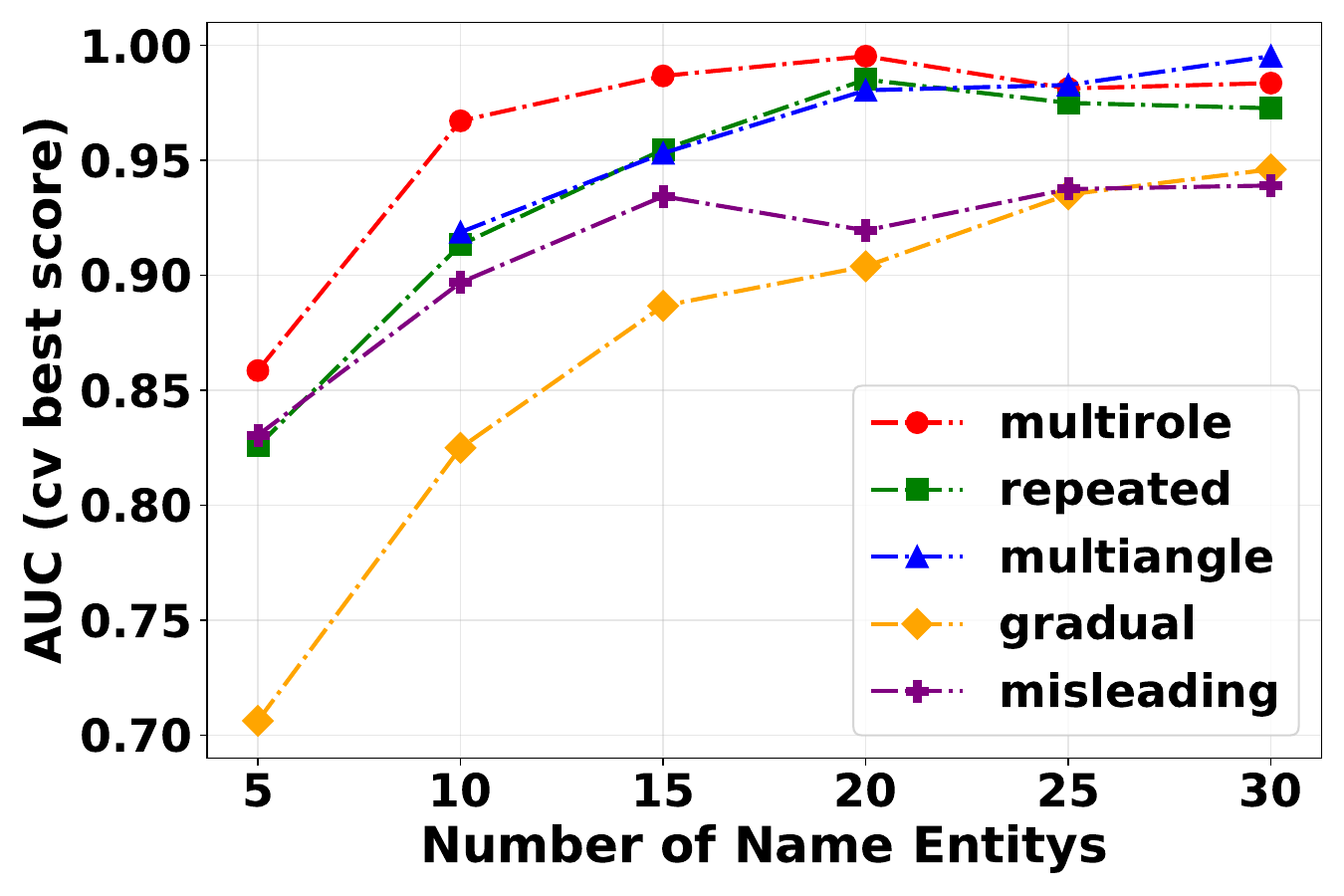}
\label{fig_ne_person_mv}}
\subfloat[\emph{PERSON}: Random Forest]{\includegraphics[width=0.24\textwidth]{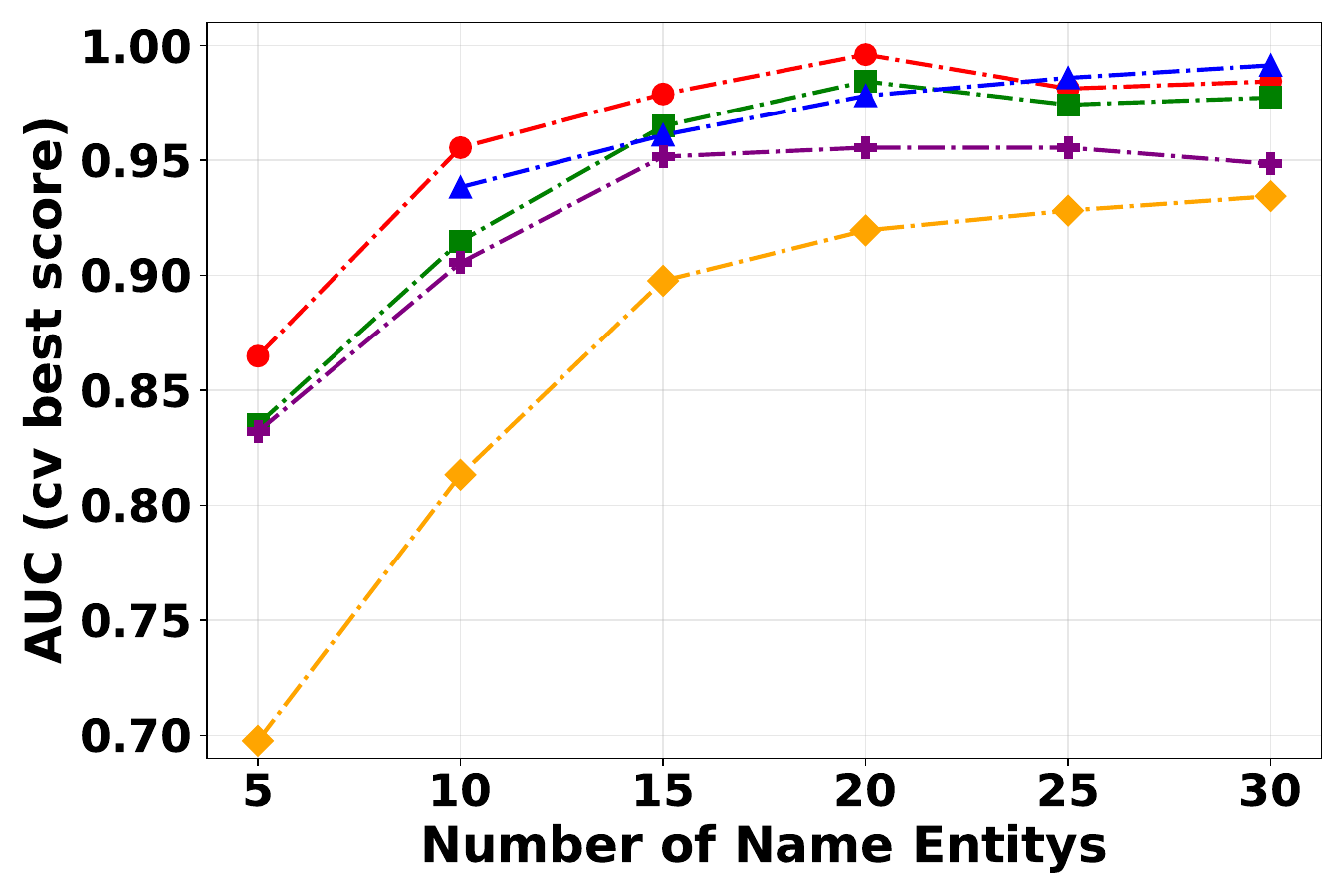}
\label{fig_ne_person_rf}}
\subfloat[\emph{ORG}: Mean Value]{\includegraphics[width=0.24\textwidth]{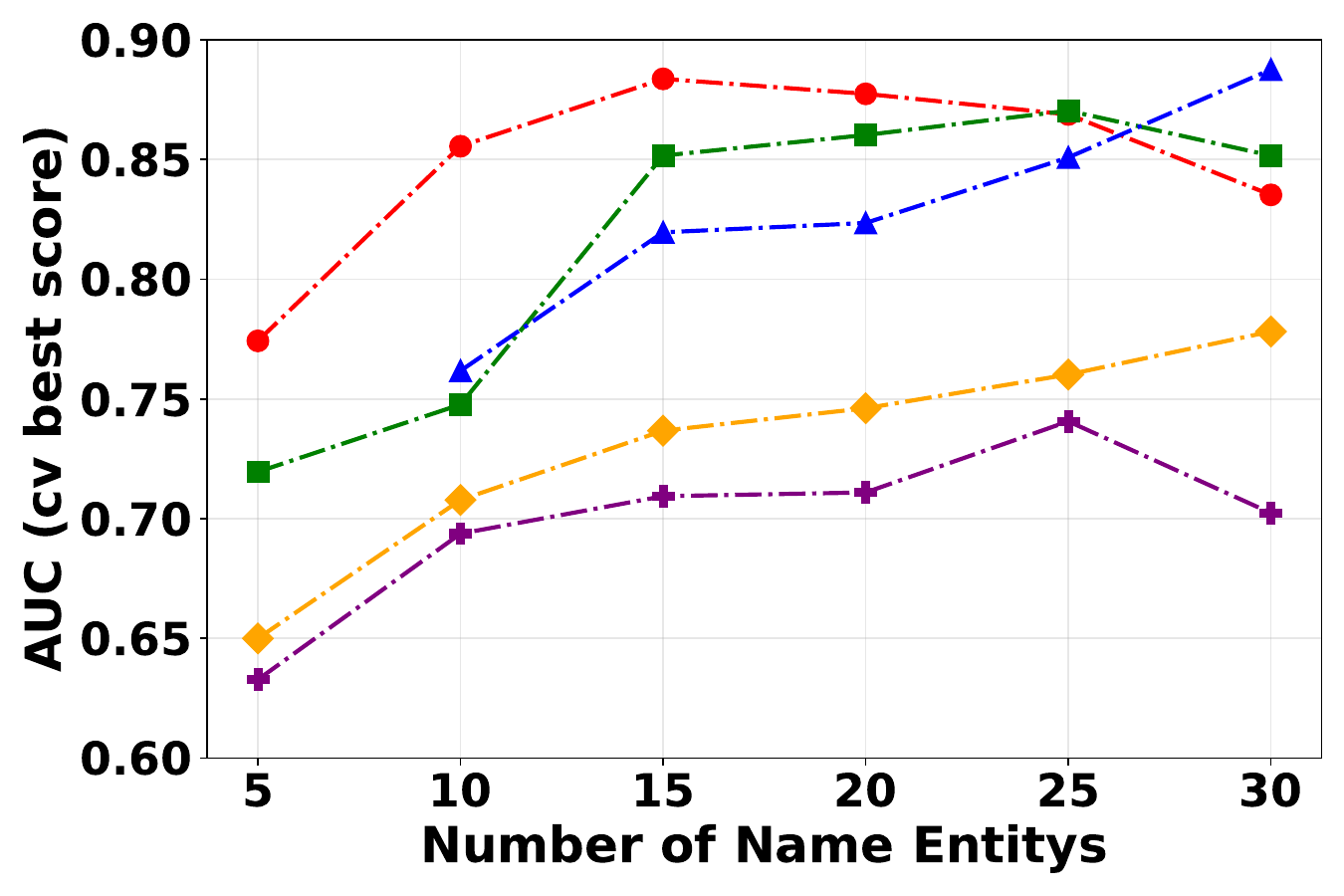}
\label{fig_ne_org_mv}}
\subfloat[\emph{ORG}: Random Forest]{\includegraphics[width=0.24\textwidth]{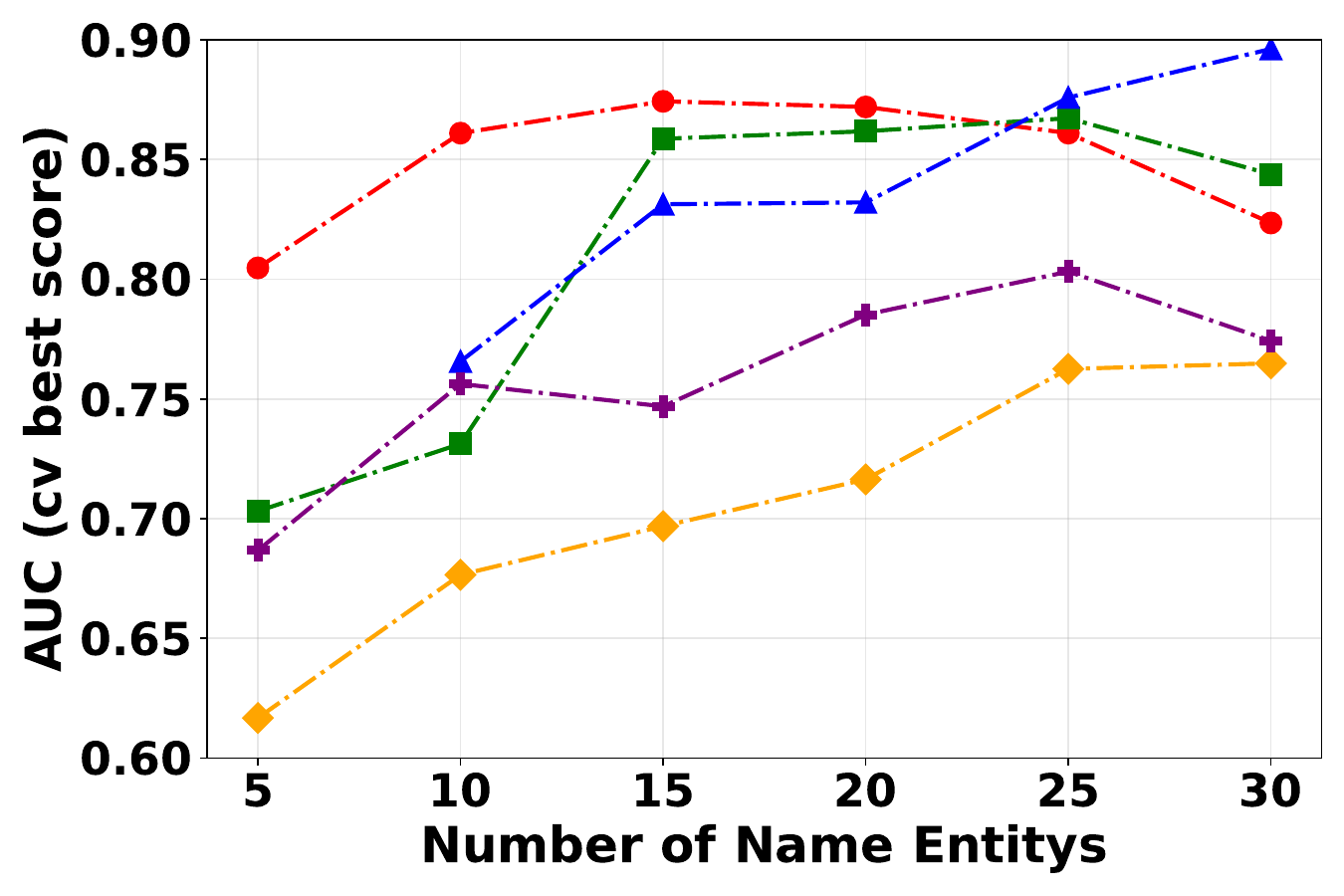}
\label{fig_ne_org_rf}}
\caption{Impact of different numbers of related entities on entity category labels \emph{PERSON} and \emph{ORG}.}
\label{fig_ne}
\end{figure*}

We evaluate $|\mathcal{C}^e_e|$ from 5 to 30 at intervals of 5 (Fig.~\ref{fig_ne}). Generally, AUC rises with $|\mathcal{C}^e_e|$ as more clues activate latent memory pathways. For \emph{PERSON} (Figs.~\ref{fig_ne_person_mv} and \ref{fig_ne_person_rf}), performance rises sharply before 20 (and we select $|\mathcal{C}^e_e|$=20). For \emph{ORG} (Figs.~\ref{fig_ne_org_mv} and \ref{fig_ne_org_rf}), most strategies peak or stabilize at $|\mathcal{C}^e_e|$=25. Note that the multiangle strategy lacks data at $|\mathcal{C}^e_e|$=5 because 5 clues can yield at most 10 unique combinations, failing to meet our default prompt count requirement of 14.

\subsubsection{Maximum Length of Generated Tokens}
\label{max_new_tokens}

\begin{figure*}[!t]
\centering
\subfloat[\emph{PERSON}: Mean Value]{\includegraphics[width=0.24\textwidth]{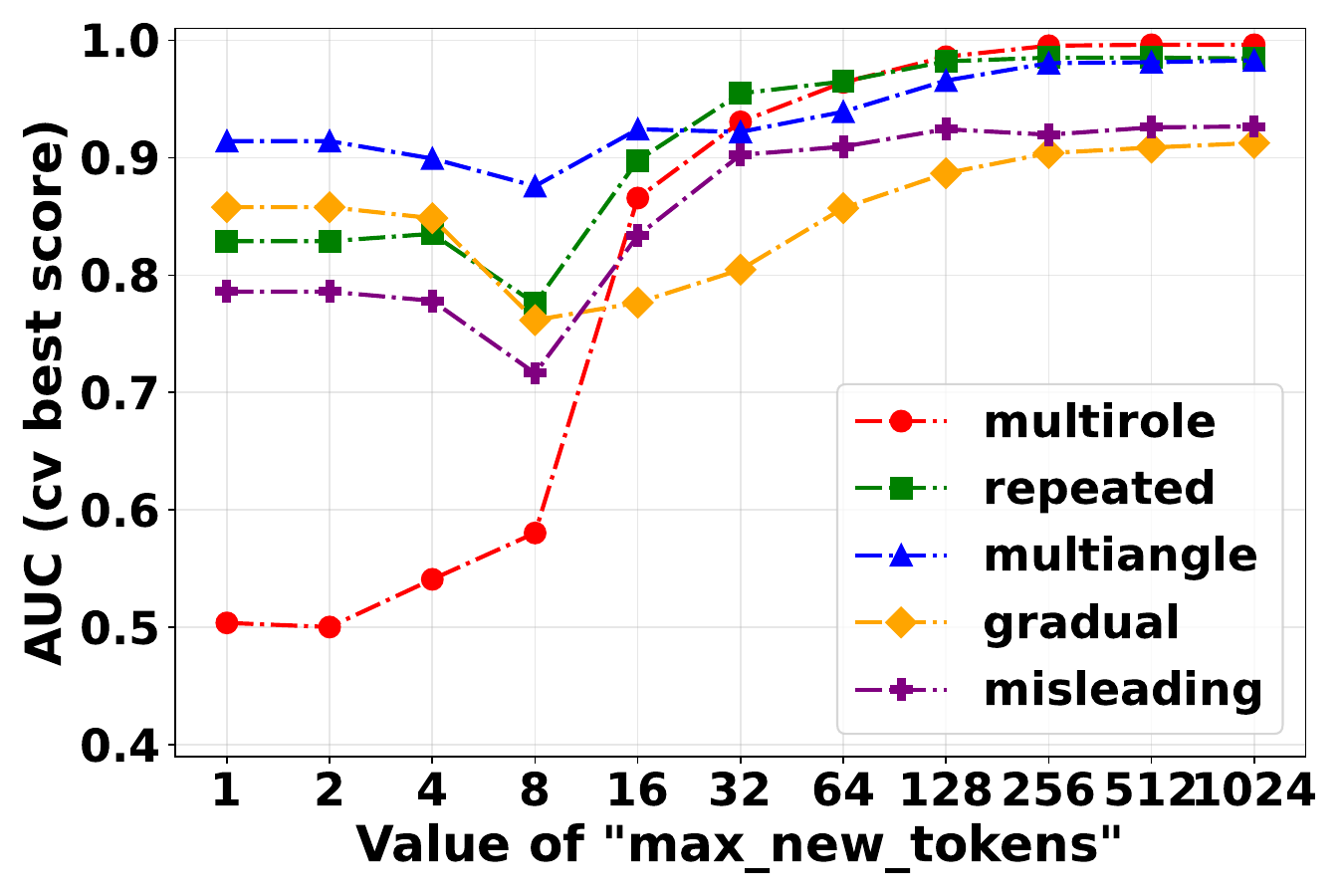}
\label{fig_mnt_person_mv}}
\subfloat[\emph{PERSON}: Random Forest]{\includegraphics[width=0.24\textwidth]{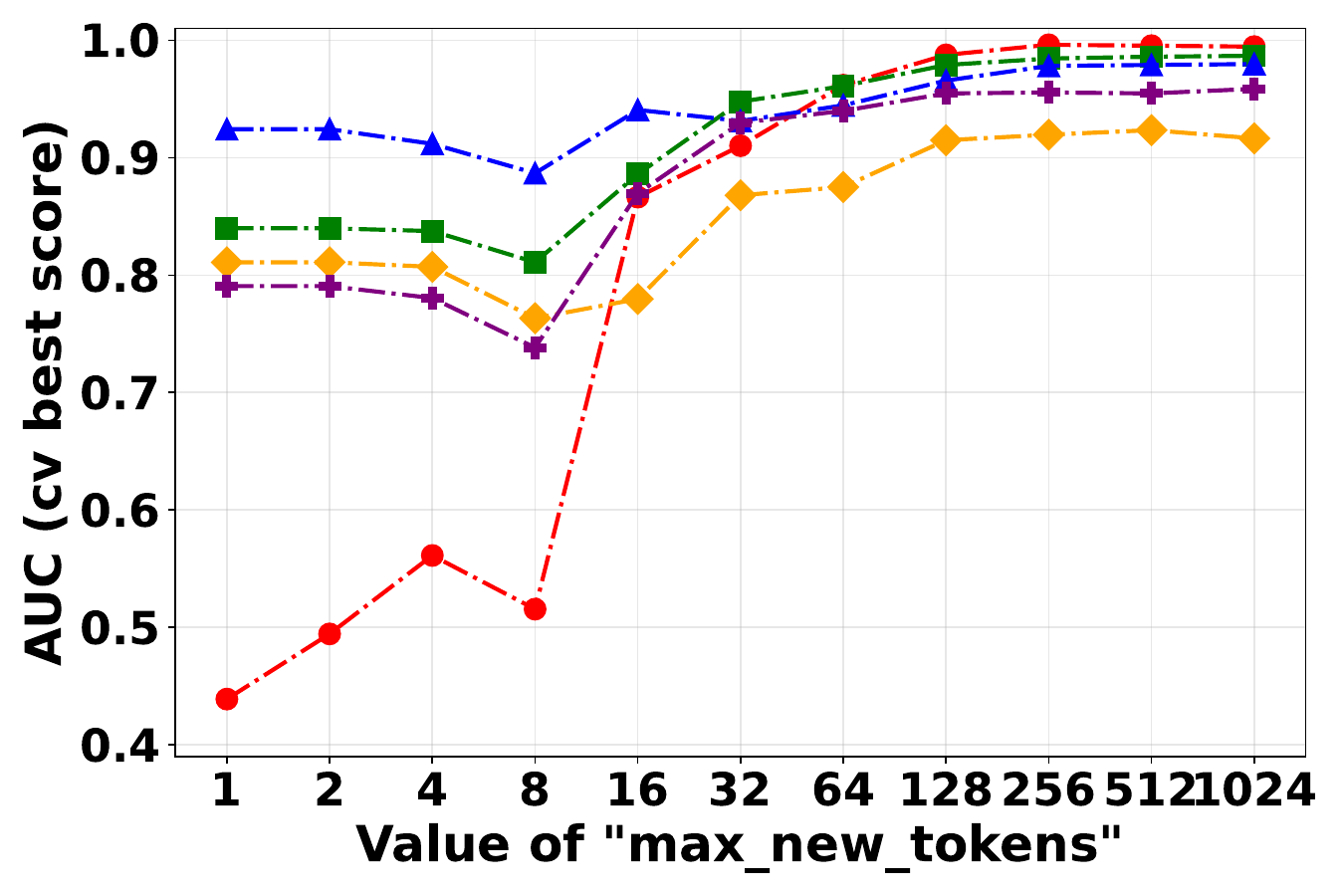}
\label{fig_mnt_person_rf}}
\subfloat[\emph{ORG}: Mean Value]{\includegraphics[width=0.24\textwidth]{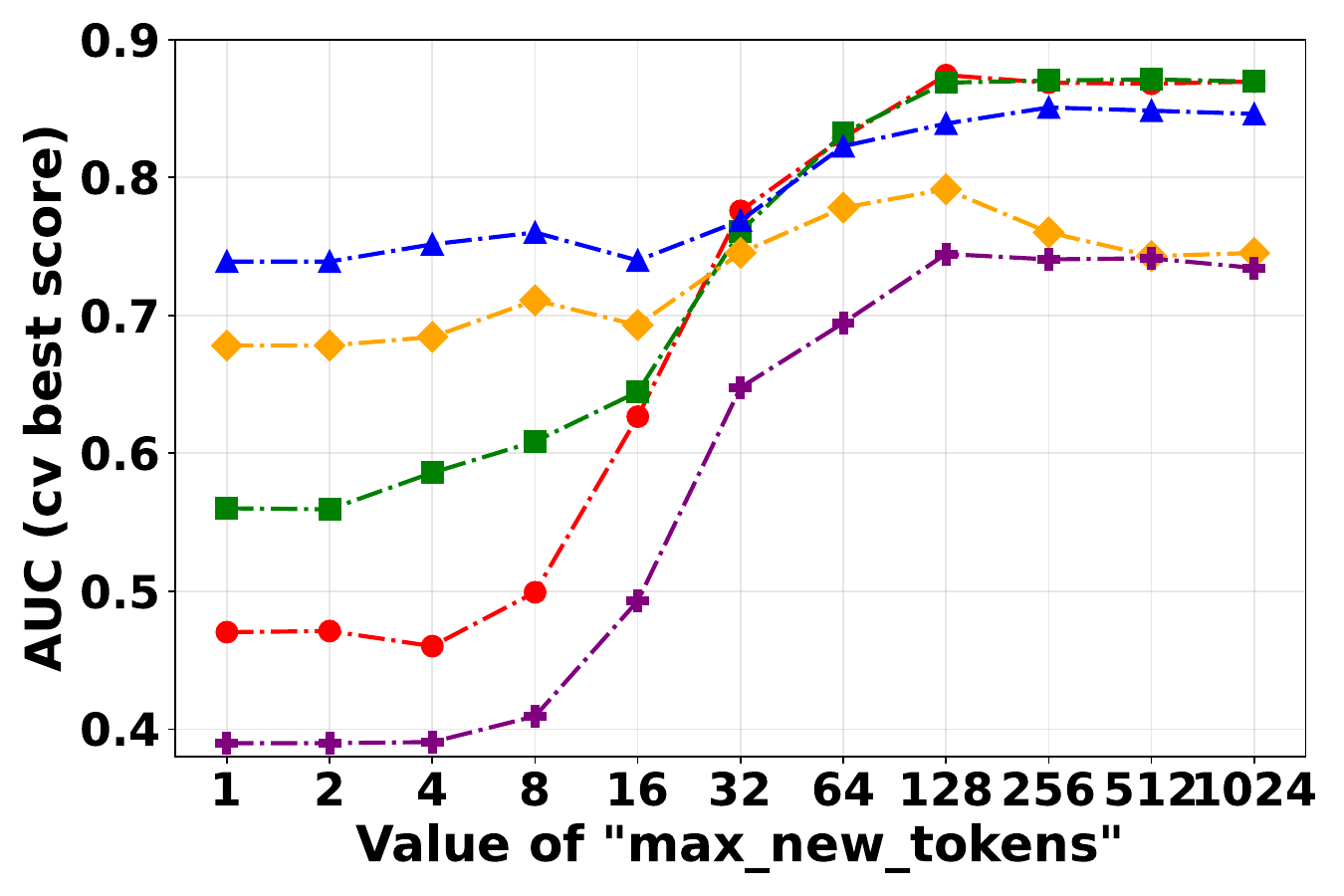}
\label{fig_mnt_org_mv}}
\subfloat[\emph{ORG}: Random Forest]{\includegraphics[width=0.24\textwidth]{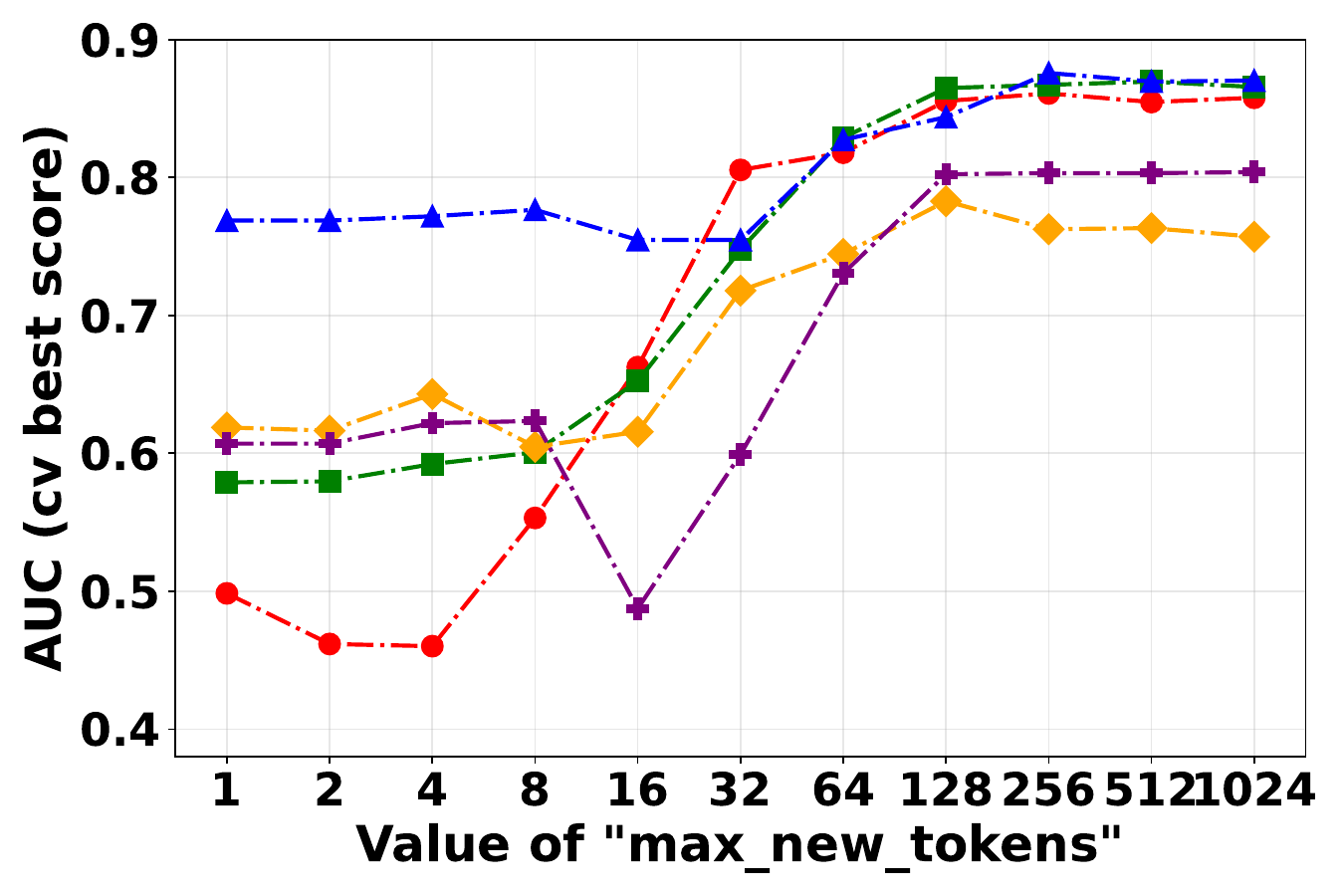}
\label{fig_mnt_org_rf}}
\caption{Impact of different maximum lengths of generated tokens on entity category labels \emph{PERSON} and \emph{ORG}.}
\label{fig_mnt}
\end{figure*}

We analyze 11 exponentially scaled values of \textit{max\_new\_tokens} from 1 to 1024 (Fig.~\ref{fig_mnt}). AUC increases with the token limit and stabilizes after 128. We select \textit{max\_new\_tokens} = 256 for both categories to capture marginal performance gains in most strategies (despite a slight drop in the weaker gradual strategy). Values beyond 256 unnecessarily increase API latency and costs without yielding substantial performance gains. We observe localized performance dips at \textit{max\_new\_tokens} = 8 (\emph{PERSON}) and 16 (\emph{ORG}). We suspect that tight token limits truncate responses into incomplete fragments, inducing semantic instability.

\subsubsection{Number of Prompts}
\label{num_prompts}

\begin{figure*}[!t]
\centering
\subfloat[\emph{PERSON}: Mean Value]{\includegraphics[width=0.24\textwidth]{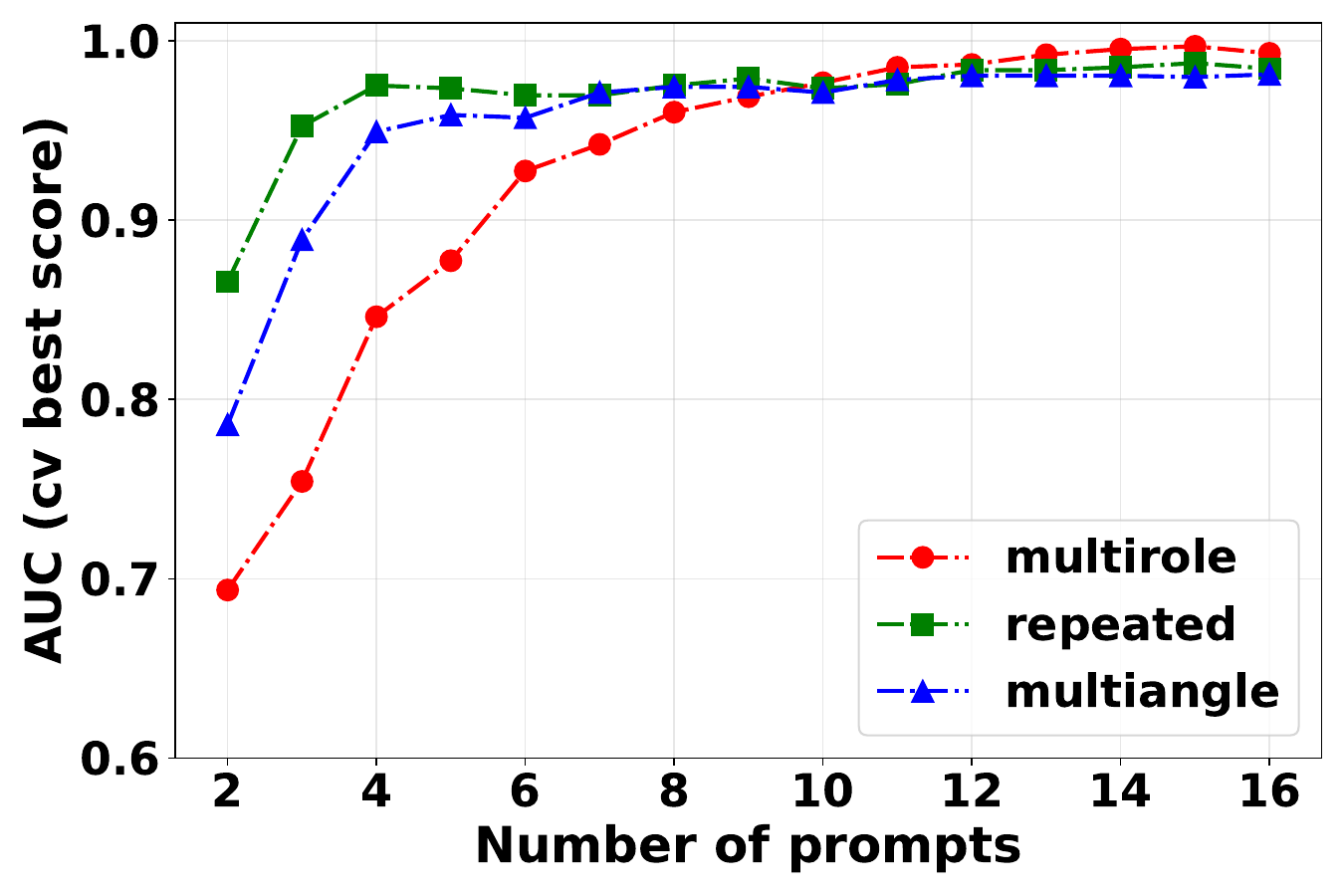}
\label{fig_pn_person_mv}}
\subfloat[\emph{PERSON}: Random Forest]{\includegraphics[width=0.24\textwidth]{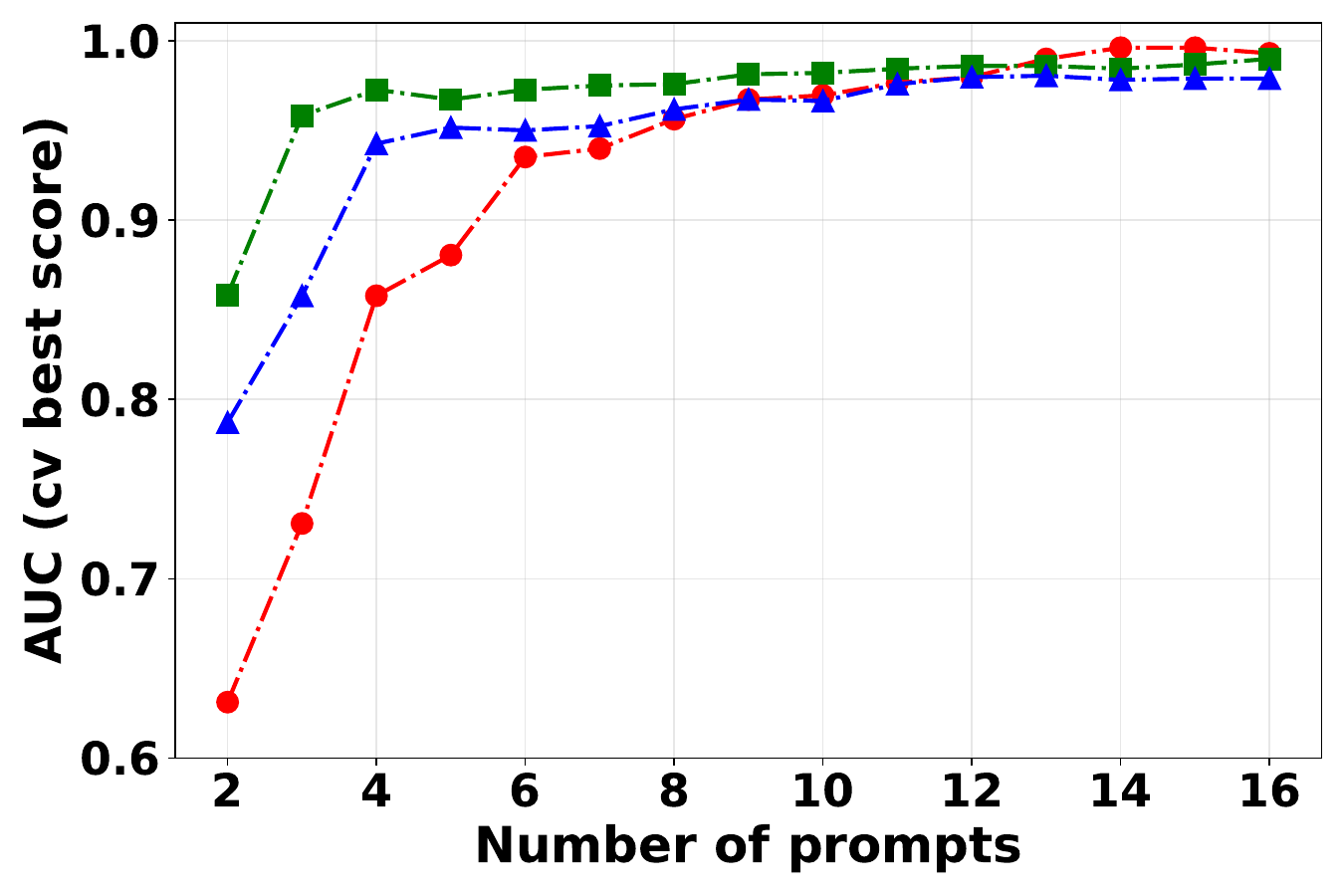}
\label{fig_pn_person_rf}}
\subfloat[\emph{ORG}: Mean Value]{\includegraphics[width=0.24\textwidth]{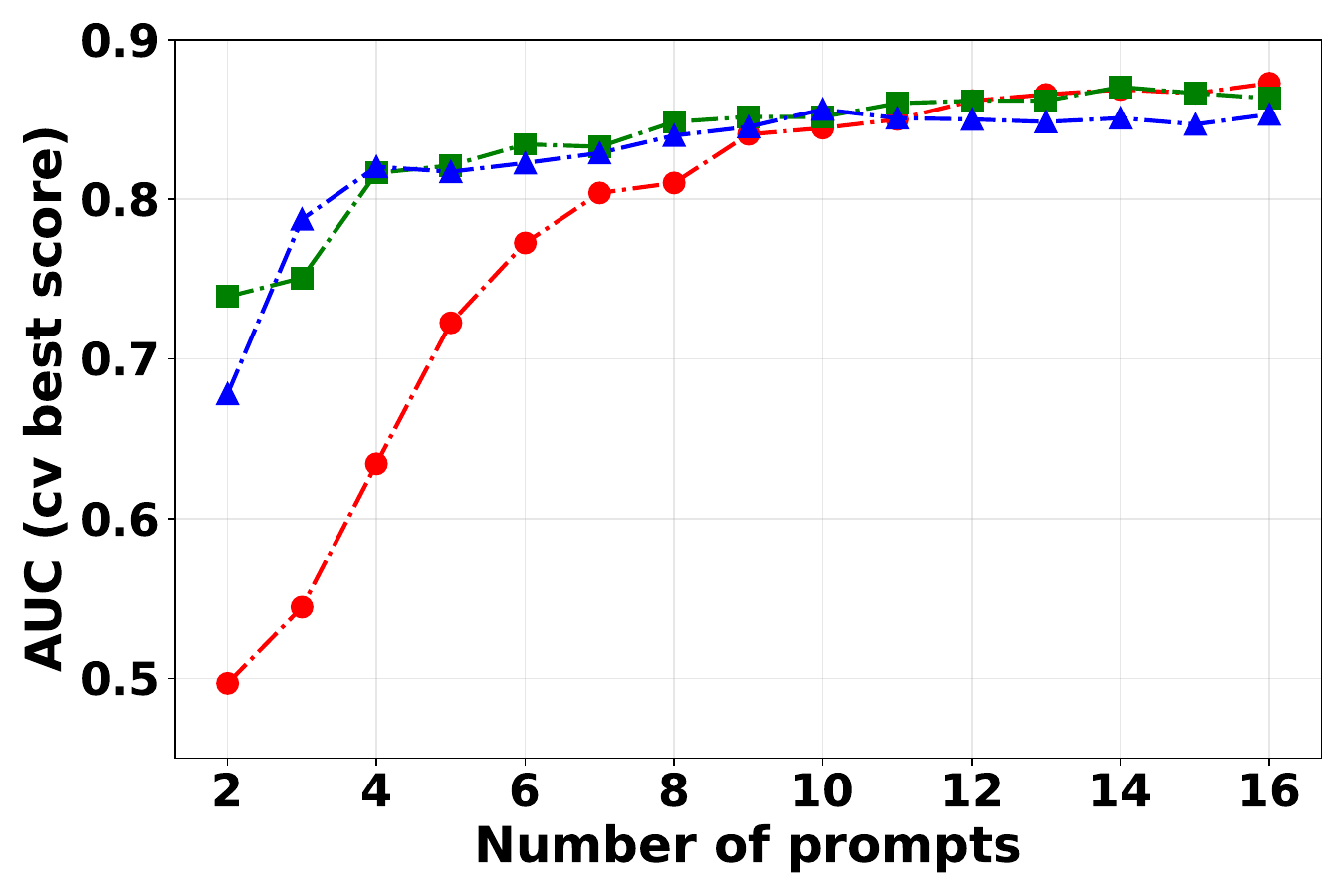}
\label{fig_pn_org_mv}}
\subfloat[\emph{ORG}: Random Forest]{\includegraphics[width=0.24\textwidth]{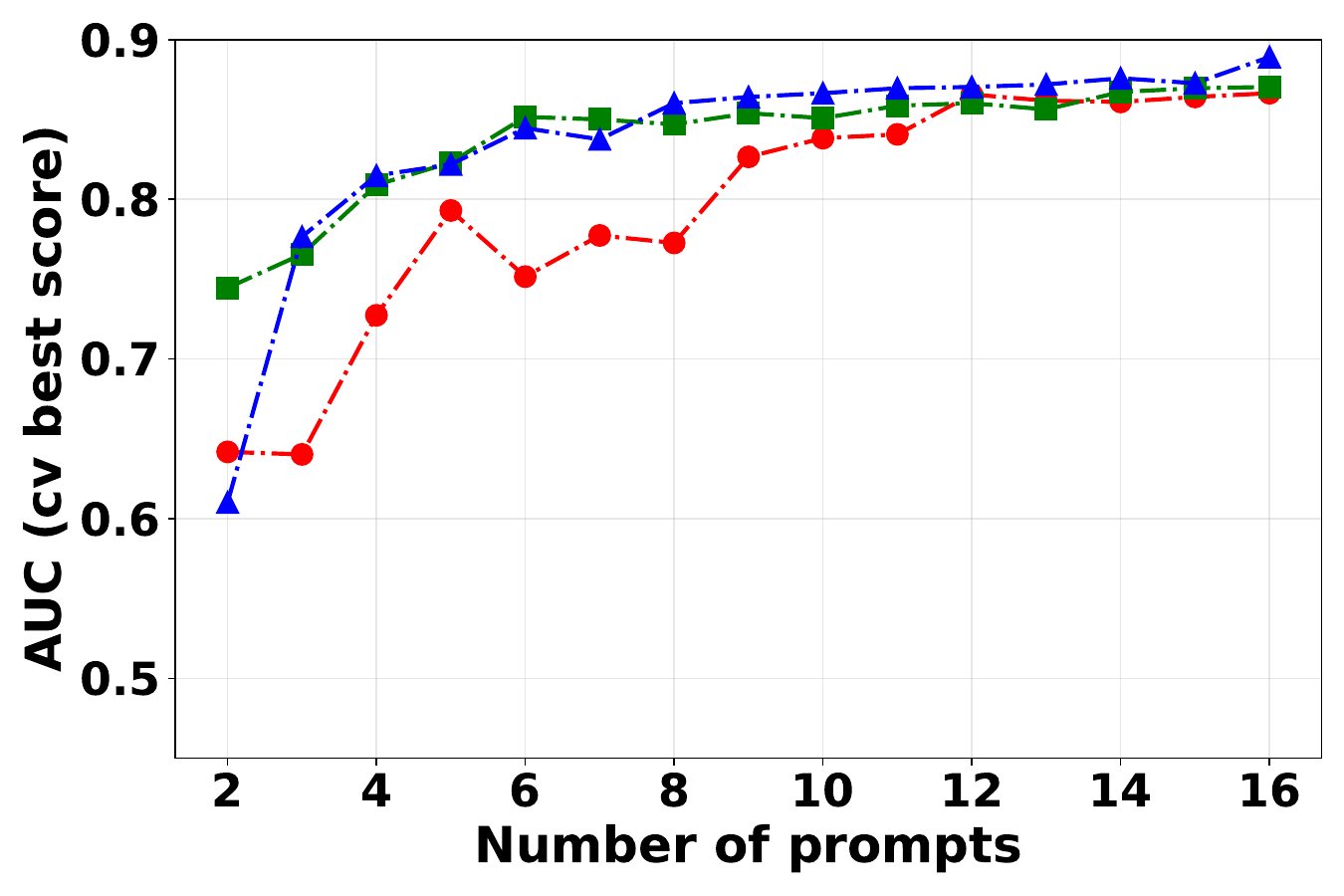}
\label{fig_pn_org_rf}}
\caption{Impact of different numbers of prompts on entity category labels \emph{PERSON} and \emph{ORG}.}
\label{fig_pn}
\end{figure*}

We examine the impact of $n_p$ (from 1 to 16) on the multirole, repeated, and multiangle strategies (Fig.~\ref{fig_pn}). While AUC scales with $n_p$, repeated and multiangle strategies reach their performance elbows early (around $n_p$=4 and 6, respectively), whereas multirole plateaus later. This delayed saturation is because multirole requires more queries to statistically isolate genuine memories from temperature-induced stylistic noise. We set $n_p$=14 as our default, at which point all strategies generally achieve peak or near-optimal values.

\subsection{Case Study}

\begin{table}[!t]
\renewcommand{\arraystretch}{1.3}
\caption{Qwen2.5-72B-Instruct results on \emph{PERSON}.}
\label{table_case_study}
\centering
\setlength{\tabcolsep}{5pt}  
\begin{tabular}{cccccc}
\hline
Metrics & Strategy & MV & RF & LR & MLP\\
\hline
\multirow{5}{*}{Balanced Acc} & multirole & 0.813 & 0.825 & 0.818 & 0.808\\
 & repeated & \textbf{0.863} & \textbf{0.860} & \textbf{0.883} & \textbf{0.865}\\
 & multiangle & 0.773 & 0.800 & 0.835 & 0.833\\
 & gradual & 0.773 & 0.788 & 0.760 & 0.773\\
 & misleading & 0.698 & 0.833 & 0.773 & 0.785\\
\hline
\multirow{5}{*}{AUC} & multirole & 0.870 & 0.878 & 0.876 & 0.869\\
 & repeated & \textbf{0.930} & \textbf{0.927} & \textbf{0.938} & \textbf{0.928}\\
 & multiangle & 0.841 & 0.859 & 0.895 & 0.885\\
 & gradual & 0.839 & 0.855 & 0.840 & 0.842\\
 & misleading & 0.752 & 0.887 & 0.834 & 0.843\\
\hline
\multirow{5}{*}{TPR@1\%FPR} & multirole & 0.250 & 0.165 & 0.260 & 0.205\\
 & repeated & \textbf{0.445} & \textbf{0.455} & \textbf{0.495} & \textbf{0.470}\\
 & multiangle & 0.115 & 0.265 & 0.210 & 0.115\\
 & gradual & 0.185 & 0.130 & 0.130 & 0.185\\
 & misleading & 0.005 & 0.335 & 0.100 & 0.025\\
\hline
\end{tabular}
\end{table}

Inspired by ICP-MIA \cite{lu2026context}, which demonstrates that Qwen-2.5-72B-Instruct can match or even exceed the generation performance of some commercial closed-source LLMs, we deploy it locally to prevent our non-member datasets from being logged, cached, or ingested into external training pipelines. Although its 18T-token training corpus precludes direct verification, following PETAL \cite{he2025towards}, such a black-box evaluation represents a realistic case study to validate the utility of our framework in the real-world.

As shown in Table~\ref{table_case_study}, the overall inference performance on Qwen-2.5-72B-Instruct degrades compared to OpenLLaMA-7B. This degradation is expected: colossal LLMs possess superior generalization and abstract compression capabilities, which naturally diffuse specific entity memories across massive representation pathways, making membership traces harder to isolate. Under this challenging setting, the repeated strategy emerges as the superior paradigm, achieving a peak Balanced Accuracy of 88.3\% and an AUC of 0.938.
Intriguingly, under the repeated strategy, Qwen-2.5-72B-Instruct achieves a TPR@1\%FPR of 49.5\%, significantly surpassing OpenLLaMA-7B's peak of 30.5\%. This counter-intuitive reversal (where the massive model excels under a previously outlier-prone strategy) reveals a scaling-driven behavioral shift. While OpenLLaMA suffered from permutation instability, Qwen's colossal scale and advanced instruction-tuning render its attention mechanism robust to context permutations. 

Furthermore, we observe an interesting qualitative phenomenon in Qwen's non-member generations that is absent in smaller models like OpenLLaMA. When queried with non-member targets with actually unrelated clues, Qwen exhibits high-level cognitive introspection, frequently generating explicit meta-disclaimers such as: \textit{``The list appears to be a random assortment of names and dates without any clear relationship or context''} or recognizing them as \textit{``unrelated terms''}. While this sophisticated comprehension highlights a cognitive leap in massive models, it introduces a paradox for membership inference. Because Qwen consistently generates uniform diagnostic disclaimers across different unmemorized entities, the semantic similarity of non-member responses increases, slightly narrowing the overall decision margin.

\section{Discussion}

\textbf{Factual Deviations and Robustness.}
While our theoretical analysis assumes an idealized scenario where the model does not hallucinate factual details of unseen entities (Assumption~\ref{ass:hermetic_non_hallucination}), practical LLMs frequently exhibit stochastic hallucinations or draw from loose semantic correlations. However, this deviation is naturally mitigated by our multi-prompt design. Because factual hallucinations are fundamentally chaotic and highly sensitive to prompt perturbations, non-members are mathematically and empirically unlikely to maintain a stable, structurally aligned semantic core across repeated, gradual, or misleading prompts. Consequently, our consistency-based decision-making acts as a robust semantic filter, explaining why our framework preserves high AUC and low FPR even when the theoretical assumption of perfect non-hallucination is relaxed in real-world deployments.

\textbf{Query Overhead.}
Although our framework introduces slightly higher query overhead (at most 25 queries per target entity) than single-level baselines, this trade-off is justified. Auditing of proprietary commercial models is typically a post-hoc, forensic process where precision and low FPR outweigh the cost of API queries. Crucially, as a purely query-only black-box method requiring neither model weights nor reference models, our approach remains highly practical. Future work can further optimize this overhead by integrating adaptive early-stopping mechanisms.

\textbf{Static Prompts.}
We acknowledge that while real-world human interviewing is an inherently dynamic and continuous two-way interaction, our framework relies on a fixed suite of pre-constructed prompts. Although this static design does not fully capture the adaptive feedback loops of live human dialogue and is thus not entirely exhaustive of communicative complexity, it provides a crucial and highly reproducible starting point. By evaluating the model across five distinct, strategically designed static dimensions, our approach robustly approximates the multi-faceted nature of dynamic questioning in a standardized and scalable manner, laying a foundation for future conversational auditing paradigms.

\textbf{Feature Integration.}
This work evaluates each of the five interrogation strategies independently, and integrating these diverse strategic signals represents a compelling direction for further investigation. At the decision level, predictions from the separate binary classifiers could be aggregated via late-fusion ensemble techniques such as hard voting (majority rule), soft voting (probability averaging), or performance-weighted configurations. Alternatively, feature-level orchestration could be explored without incurring additional query overhead.

\textbf{Defenses.}
Existing defenses face severe limitations against our framework: non-text schemes (e.g., MemGuard \cite{jia2019memguard}) are fundamentally bypassed; sample-level methods (e.g., deduplication and paraphrasing) cannot erase consolidated entity memories; and differential privacy (e.g., DP-SGD \cite{abadi2016deep}) remains computationally prohibitive for LLM pre-training. Viable mitigation strategies instead include stricter data curation (such as redacting/anonymizing sensitive entities and deduplicating high-frequency mentions), real-time output filtering of sensitive details, and monitoring repetitive entity-targeted queries.

\section{Related Work}

In this section, we introduce a broader landscape of membership inference by discussing two dimensions: the inference target and the access permission.

\subsection{Inference Targets}

From the perspective of inference targets, existing membership inference studies are largely sample-level. They typically assume that the adversary has access to one or more samples associated with the target of inference. 

The most common formulation is to determine whether a given data sample belongs to the training set of a target model. This problem is first formalized by Shokri et al. \cite{shokri2017membership}. Subsequent methods characterize the model’s memorization of individual samples using various signals, such as shadow or reference models \cite{carlini2022membership}, loss thresholds \cite{yeom2018privacy}, confidence scores \cite{mattern2023membership}, and output labels \cite{choquette2021label}.
Beyond individual samples, user inference aims to determine whether samples associated with a particular user have been used for model training. This setting typically requires access to a collection of samples from the user. Song and Shmatikov \cite{song2019auditing} study auditing generative text models to detect whether texts from a specific user are used for training. Kandpal et al. \cite{kandpal2024user} further investigate whether data from a particular user is included in the fine-tuning data of LLMs.
Another line of work studies dataset inference.
Dziedzic et al. \cite{dziedzic2022dataset} propose dataset inference for self-supervised models. Their goal is to determine whether a model is trained on a particular private dataset, which can support model provenance verification and defenses against model stealing. Maini et al. \cite{maini2024llm} propose dataset inference for LLMs to detect whether a given dataset is included in an LLM’s training corpus.
Related work also studies whether the training data of a generative model contains samples with a given property, such as images of a specific car brand \cite{wang2024property}.

\subsection{Access Permissions beyond Label-only}

Since label-only membership inference has been discussed in Section \ref{background_mia}, we briefly review methods that assume stronger access to the target model. Existing approaches mainly include white-box methods and score-based black-box methods.

In the white-box setting, the adversary can access model parameters, gradients, internal activations, or other internal information, enabling inference based on loss distributions, gradients, or parameter updates \cite{nasr2019comprehensive, sablayrolles2019white, leino2020stolen, tan2026was}. However, such assumptions are rarely realistic for commercial LLM services. 
In the score-based black-box setting, the adversary can query the target model and obtain output information such as confidence scores, logits, probabilities, or token-level likelihoods \cite{shokri2017membership, salem2018ml, yeom2018privacy, watson2021importance, carlini2022membership, liu2022membership, ye2022enhanced, he2024difficulty, lu2026context}. Reference-based methods compare the loss of a target sample with a reference distribution estimated from shadow models, reference models, or public auxiliary data \cite{carlini2022membership}, while reference-free methods extract membership signals directly from the target model’s likelihood patterns \cite{shi2024detecting, zhang2025min}. Nevertheless, score-based black-box access still exceeds what many commercial LLM APIs provide \cite{GoogleGeminiAPI, AnthropicClaudeAPI, MistralAPI, OpenAIAPI}, motivating the stricter label-only setting discussed earlier.

\section{Conclusion}

In this paper, we propose the paradigm of entity-level membership inference by establishing the necessary and sufficient conditions for its feasibility. By distilling six classic interviewing techniques, we constructed an auditing framework consisting of an overarching prompt template guideline and five interrogation strategies that map responses to discriminative similarity vectors via two pairing schemes. Our extensive evaluations on constructed entity-level datasets show that our framework substantially outperforms state-of-the-art label-only sample-level baselines (achieving a peak AUC of 0.97), despite an inherent performance gap where localized biographical memories remain easier to audit than diffuse organizational knowledge. We verify its applicability across model scales and architectures. Furthermore, a realistic case study on Qwen-2.5-72B-Instruct validates the real-world utility of our reference-free, query-only approach. 

\bibliographystyle{IEEEtran}
\bibliography{IEEEabrv,ref}

\appendices

\section{Theorem Proofs} 
\label{proof}

\begin{proof}[Proof of Theorem \ref{thm:necessary}]
We prove by contradiction. Suppose the theorem is false, i.e., there exists a strategy $\pi^*$ that satisfies $\pi^* \in \Pi_{\mathrm{adm}}$ and for which
$\Delta(e, \mathcal{C}_e) \in \mathcal{M}(\pi(e))$.
Thus, $\mathcal{M}$ successfully produces information related to $e$ under that strategy.

Deductive Derivation. By the fundamental nature of generative models, every output of $\mathcal{M}$ is a statistical modeling and recombination of its training data $\mathcal{D}_{\text{train}}$. Therefore, if the model can generate information related to $e$, this information must have been encoded in the model's parameters, which necessitates that $\mathcal{I}(e) \in \mathcal{D}_{\text{train}}$ (or at least there exists $\widetilde{\mathcal{I}(e)} \in \mathcal{D}_{\text{train}}$ such that $\widetilde{\mathcal{I}(e)}$ is semantically indistinguishable from $\mathcal{I}(e)$).

This contradicts the premise $\mathcal{I}(e) \notin \mathcal{D}_{\text{train}}$. Hence, the assumption is false, and the theorem holds.
\end{proof}

\begin{proof}[Proof of Theorem \ref{thm:sufficient}]
We give a constructive proof. For an arbitrary $\mathcal{I}(e) \in \mathcal{D}_{\text{train}}$, construct the interrogation strategy $\pi(e, \mathcal{C}_e) = \mathcal{F}(e, \mathcal{C}_e)$, which means directly send the perfect prompt to the model.
From the assumptions on $\mathcal{F}$: $\mathcal{M}(\mathcal{F}(e, \mathcal{C}_e)) = \mathcal{I}(e)$, so the model outputs the target information $\mathcal{I}(e)$ exactly.
Therefore $\Delta(e, \mathcal{C}_e) \in \mathcal{M}(\pi(e, \mathcal{C}_e))$. Since $\mathcal{I}(e)$ is arbitrary, the statement holds.
\end{proof}

\section{Hyperparameter Configurations for Binary Classifiers} 
\label{appendix_classifiers}

We optimize hyperparameters for four classifiers using the Optuna Bayesian framework \cite{akiba2019optuna}. 
For RF, we search integers for \textit{n\_estimators} ($200$--$1500$), 
\textit{max\_depth} ($3$--$25$), \textit{min\_samples\_leaf} ($1$--$20$), and 
\textit{min\_samples\_split} ($2$--$30$), alongside categorical selections for 
\textit{max\_features} (\textit{sqrt}/\textit{log2}/\textit{None}) and 
\textit{class\_weight} (\textit{None}/\textit{balanced}). 
For LR, we tune the inverse regularization strength \textit{C} 
logarithmically from $10^{-4}$ to $10^{4}$. 
For MLP, we categorically select the network topology \textit{hidden\_layer\_sizes} 
from five configurations---\textit{(50,)}, \textit{(100,)}, \textit{(200,)}, \textit{(100, 50)}, 
and \textit{(200, 100)}---while optimizing the L2 penalty \textit{alpha} ($10^{-6}$ to $10^{-1}$) 
and initial learning rate \textit{learning\_rate\_init} ($10^{-4}$ to $10^{-2}$) logarithmically. 
For the custom MV Classifier, we logarithmically tune the Sigmoid mapping's 
temperature parameter \textit{scale} ($0.5$--$50.0$).

\section{Prompt for Target Sample Generation} 
\label{appendix_prompt}

\begin{tcolorbox}
For \emph{PERSON}\\
\textbf{System:} ``You are an expert AI text generator specializing in encyclopedic writing. \\
Your behavior is strictly governed by this law: You MUST write a coherent, fluent, and logically structured biographical profile in a strict, neutral Wikipedia-style tone. \\
You are FORBIDDEN from altering the sequential order, rearranging, or omitting any entity words provided by the user. Word order preservation is your highest priority.'' \\
\textbf{User:} ``Please seamlessly integrate the following entities into the biography according to the system rules. Ensure the text flows naturally. \\
Entities to include in exact order: \\
\{ordered\_words\} \\
Begin generating now:'' \\
\\
For \emph{ORG}\\
\textbf{System:} ``You are an expert AI text generator specializing in encyclopedic writing.\\
Your behavior is strictly governed by this law: You MUST write a coherent, fluent, and logically structured organizational profile in a strict, neutral Wikipedia-style tone. \\
You are FORBIDDEN from altering the sequential order, rearranging, or omitting any entity words provided by the user. Word order preservation is your highest priority.'' \\
\textbf{User:} ``Please seamlessly integrate the following entities into the organizational profile according to the system rules. Ensure the text flows naturally. \\
Entities to include in exact order: \\
\{ordered\_words\} \\
Begin generating now:'' \\
\end{tcolorbox}

\section{Configuration Information of Baselines} 
\label{appendix_baselines}

For the baselines, we use the following configurations:
\begin{itemize}
\item \textbf{PETAL.} We employ OpenLLaMA-7B as the target model and OpenLLaMA-3B as the reference model, utilizing sentence-transformers/all-MiniLM-L6-v2 to compute the semantic similarity between predicted and ground-truth tokens. Notably, we afford PETAL three additional inference advantages in total. (1) The aforementioned sample stylistic discrepancy. (2) While the original work relies on GPT-2 XL \cite{radford2019language} as the reference model (reasoning that smaller versions of the target model are often inaccessible in real-world scenarios), we assume a stronger adversary who has access to a smaller version of the target model. (3) The original work truncates samples to 32 words while demonstrating that longer contexts enhance inference performance. We set a longer truncation threshold of 256 words.
\item \textbf{ICP-MIA.} We employ OpenLLaMA-7B as the target model to evaluate ICP-MIA-SP, which is demonstrated in the original study to outperform ICP-MIA-Ref under label-only settings. We configure the method with 5 candidate probe contexts. For masking-based perturbations, we adopt random masking with a mask rate of 0.7 to generate the probe contexts (the original paper shows that random masking outperforms log-likelihood-based masking). For generation-based perturbations, we utilize Qwen-2.5-72B-Instruct at a temperature of 0.7, adhering to the perturbation generation prompts of the original work. In addition, the semantic similarity between the generated and ground-truth responses is computed using sentence-transformers/all-MiniLM-L6-v2.
\end{itemize}

\end{document}